\documentclass{article}

\IfFileExists{neurips_2026.sty}{%
  \usepackage[preprint]{neurips_2026}%
}{%
  \usepackage[margin=1in]{geometry}%
  \providecommand{\And}{\hspace{2em}}%
  \newenvironment{ack}{\section*{Acknowledgements}}{}%
}
\providecommand{\answerYes}[1][]{[Yes]}
\providecommand{\answerNo}[1][]{[No]}
\providecommand{\answerNA}[1][]{[NA]}
\providecommand{\answerTODO}[1][]{[TODO]}

\usepackage[utf8]{inputenc}
\usepackage[T1]{fontenc}
\usepackage{microtype}
\usepackage{wrapfig}
\usepackage{graphicx}
\usepackage{tikz}  % overlay row labels on trajectory images in fig:method_overview
\usetikzlibrary{calc}  % needed for [yshift=...,xshift=...]coord.anchor syntax
\usepackage{subcaption}
\usepackage{booktabs}
\usepackage{array}  % m{...} column type for vertical centering in tabular cells
\usepackage{amsmath,amssymb,amsthm,amsfonts,mathtools}
\usepackage{nicefrac}
\usepackage{algorithm}
\usepackage{algpseudocode}
\usepackage{hyperref}
\usepackage{url}
\usepackage{rotating}
\usepackage{adjustbox}
\usepackage{caption}
\usepackage[table]{xcolor}
\usepackage{placeins}  % \FloatBarrier — keep figures within their section
\usepackage{pifont}
\providecommand{\cmark}{\ding{51}}
\providecommand{\xmark}{\ding{55}}

\makeatletter
\@ifpackageloaded{natbib}{}{\usepackage{natbib}}
\makeatother

\graphicspath{{./}{./figures/}{./images/}{./fig/}}

\let\origincludegraphics\includegraphics
\renewcommand{\includegraphics}[2][]{%
  \IfFileExists{#2}{\origincludegraphics[#1]{#2}}%
    {\fbox{\parbox{0.45\linewidth}{\centering\footnotesize\ttfamily missing: \detokenize{#2}}}}%
}

\newtheorem{proposition}{Proposition}
\newtheorem{remark}{Remark}

\title{PG-MAP: Joint MAP Optimization for Inference-Time Alignment\\of Diffusion and Flow-Matching Models}

\author{%
  Ruolan Sun \\
  Stony Brook University \\
  \texttt{ruolan.sun@stonybrook.edu}
  \And
  Pawel Polak \\
  Stony Brook University \\
  \texttt{pawel.polak@stonybrook.edu}
}
\begin{document}
\maketitle

\begin{abstract}
Inference-time alignment of pretrained text-to-image models is typically performed along a single control axis, such as classifier-free guidance, attention editing, or reward-based latent perturbations. This limitation prevents modeling joint dependencies between conditioning and latent variables and hinders transfer across generative transports. We propose PG-MAP, a training-free framework that formulates inference-time alignment as a trajectory-level Gibbs-MAP / proximal energy optimization over the conditioning $c$ and latent state $z_t$ via a forward-consistency coupling, optionally guided by a frozen preference reward. This joint formulation enables coordinated updates across modalities while remaining compatible with both diffusion and flow-matching models through transport-specific adaptations. Across diffusion backbones (SD~1.5, SDXL), PG-MAP consistently improves alignment metrics such as PickScore and Aesthetic, and can be effectively combined with tuned classifier-free guidance to achieve the strongest overall performance. On flow-matching models (SD3.5-medium), the framework reduces to a latent-only variant, achieving $\mathbf{91.9\%}$ PickScore and $75.7\%$ HPS win rates against a static baseline, with controlled experiments ruling out noise-related artifacts. Human evaluations further confirm consistent preference over strong baselines, including tuned CFG and compute-matched universal guidance. Finally, an oracle-routing analysis shows that the relative importance of conditioning and latent optimization depends on prompt types, surfacing further headroom that a per-prompt selector could exploit.

\smallskip
\noindent\textbf{Code:} \url{https://github.com/sophialanlan/PG-MAP}
\end{abstract}

%=============================================================================
\section{Introduction}
\label{sec:intro}
%=============================================================================

Diffusion and flow-matching models~\citep{ho2020ddpm,rombach2022ldm,esser2024sd3} synthesize images by iteratively denoising a latent variable conditioned at every step on a fixed text embedding $c_0{=}\tau(y)$. The same embedding drives denoising at high-noise timesteps (which resolve global layout) and low-noise timesteps (which refine local detail), with no mechanism to reflect the changing information needs of the denoiser; compositional prompts in particular suffer from attribute leakage during early denoising~\citep{chefer2023attend,hertz2022prompt2prompt}. Existing inference-time fixes act on a single axis: conditioning-side methods edit cross-attention or learn embeddings~\citep{chefer2023attend,hertz2022prompt2prompt,gal2023textualinversion,ruiz2023dreambooth,wen2024pez}, latent-side methods perturb $z_t$ along a reward gradient~\citep{bansal2023universal,yu2023freedom,benhamu2024dflow,patel2025flowchef}, and training-based alternatives~\citep{wallace2023diffusiondpo} sidestep both axes by retraining $\theta$. No prior framework couples $c$ and $z_t$ through the denoiser's own forward kernel --- what we call a \emph{forward-consistency coupling} --- so that updates on the two axes are coordinated rather than additive; nor has any been analyzed across both diffusion and flow-matching transports.

Existing methods are also \emph{static} --- fixing the control axis once at $z_T$ or offline --- whereas the trajectory itself is dynamic. We propose \textbf{PG-MAP} (Preference-Guided Adaptive MAP), a training-free framework that recasts each denoising step as a proximal MAP problem with per-step objectives, schedule-adaptive trust regions, and a step-dependent active set.

We exploit two properties of this framework. \emph{(i) Adaptive, per-step refinement of $(c, z_t)$}: rather than perturbing the initial noise $z_T$ once or learning $c$ offline as in prior work, PG-MAP re-optimizes both variables at every denoising step under a schedule-adaptive trust region, so the conditioning and the latent inform each other as the trajectory unfolds, with the prior loosening at high noise (where $z$ is malleable) and tightening near the data end (where the trajectory is fragile). \emph{(ii) One objective, two transports}: the same $\mathcal{J}_t$ instantiates on diffusion as full joint refinement and on flow matching as a transport-specific reduction to a latent-only variant we denote UG-FM, so a single framework covers both denoising paradigms. Figure~\ref{fig:pgmap_showcase} previews the headline visual claim on SDXL: a single PG-MAP run improves both the $c$-side compositional structure (body silhouette, hand pose) and the $z$-side texture / lighting (feathers, hair) over the static baseline at the same seed, lifting both axes jointly.

\paragraph{Contributions.}
\begin{itemize}\setlength{\itemsep}{1pt}
  \item \textbf{Joint $(c, z_t)$ MAP framework with forward-consistency coupling.} The first inference-time framework that couples the two axes through the denoiser's own forward kernel, targeting composition ($c$-side) and texture ($z$-side) failure modes simultaneously (Fig.~\ref{fig:pgmap_showcase}).
  \item \textbf{Unified objective covering prior single-axis methods.} $\mathcal{J}_t$ recovers conditioning-only and latent-only variants and a Universal-Guidance-style limit as analytic special cases (Rem.~\ref{rem:special_cases}); CFG modifies the denoiser vector field and is composable with PG-MAP rather than a special case of it. Joint coupling and adaptive scheduling are the axes prior single-axis methods do not exploit.
  \item \textbf{Schedule-adaptive, step-dependent trajectory optimization.} $\mathcal{J}_t$ is explicitly time-dependent with a schedule-adaptive trust region $\sigma_z(t)$ and a step-dependent active set $\mathcal{A}_t$ that selects which variables to refine at each step.
  \item \textbf{Transport-dependent active set with empirical validation.} A local perturbation analysis motivates a transport-dependent active set $\mathcal{A}_t$, with diagnostic support; PG-MAP gains $5$--$7$\,pp on SD~1.5 / SDXL (Tab.~\ref{tab:main_results}), reaches $\mathbf{91.9\%}$ / $\mathbf{75.7\%}$ PS / HPS on SD3.5-medium (Tab.~\ref{tab:fm_main}), and wins $60$--$67\%$ pairwise human preference ($100$ raters, \S\ref{sec:human_eval}).
\end{itemize}

\begin{figure}[t]
\centering
\setlength{\tabcolsep}{3pt}
\begin{tabular}{@{}>{\centering\arraybackslash}m{0.32\linewidth} >{\centering\arraybackslash}m{0.32\linewidth} m{0.27\linewidth}@{}}
  % Row 1: Phoenix
  \includegraphics[width=\linewidth]{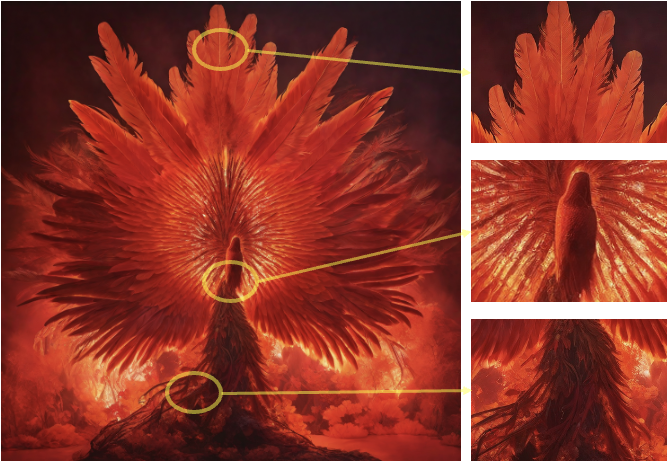} &
  \includegraphics[width=\linewidth]{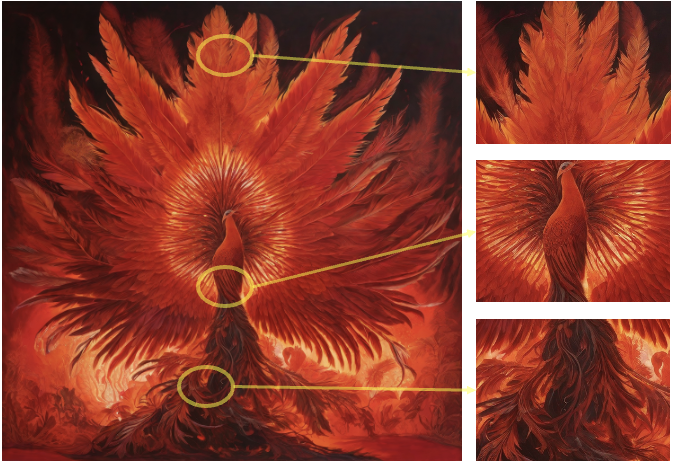} &
  {\footnotesize \emph{``a phoenix rising from ashes, vivid orange and red feathers, dramatic lighting''}\,---\,PG-MAP renders sharper \textbf{feathers} (texture, $z$-side), a more coherent \textbf{body} silhouette ($c$-side), and a richer \textbf{tail} plume.} \\[3pt]
  % Row 2: Swordsman
  \includegraphics[width=\linewidth]{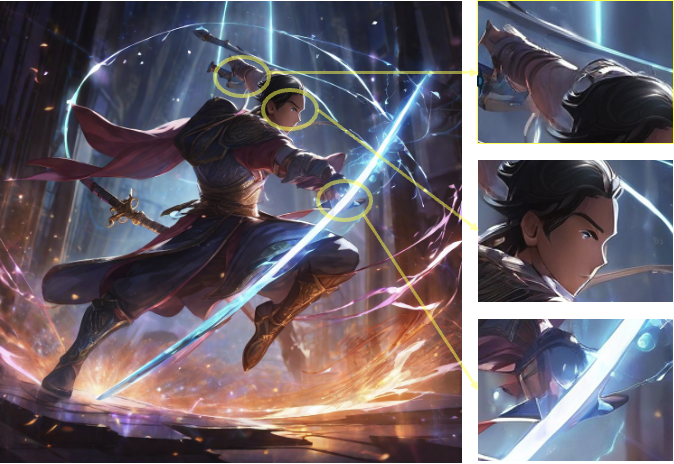} &
  \includegraphics[width=\linewidth]{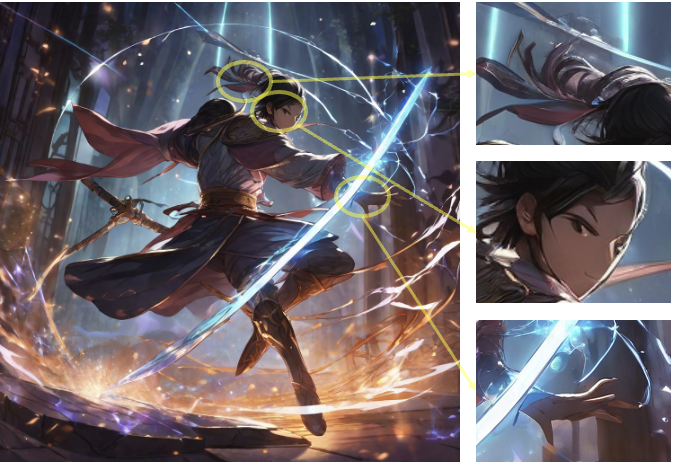} &
  {\footnotesize \emph{``a swordsman mid-leap slashing through a glowing magical barrier''}\,---\,PG-MAP produces more detailed \textbf{hair} (texture, $z$-side), a more articulated \textbf{face}, and an anatomically correct \textbf{hand} on the sword grip ($c$-side).} \\[1pt]
  % Single column-label row at the bottom
  {\footnotesize Baseline} & {\footnotesize \textbf{PG-MAP}} & \\
\end{tabular}
\caption{\textbf{Joint PG-MAP exercises both axes at once on SDXL} (same seed within each pair). Side annotations identify the per-prompt $c$- and $z$-side gains; zoom-in boxes mark them. Population-scale PartiPrompts win rates: Tab.~\ref{tab:main_results}; trajectory-level mechanism: Fig.~\ref{fig:method_overview}.}
\label{fig:pgmap_showcase}
\end{figure}

\FloatBarrier  % keep Fig.~\ref{fig:pgmap_showcase} in §1 Introduction

%=============================================================================
\section{Method}
\label{sec:method}
%=============================================================================

\subsection{Preliminaries}
\label{sec:preliminaries}

We work with a pretrained latent diffusion model~\citep{rombach2022ldm}, in which a VAE encoder $\mathcal{E}$ maps an image into a clean latent $z_0 = \mathcal{E}(x)$, a forward Gaussian process diffuses $z_0$ into pure noise $z_T$, and a learned denoiser $\epsilon_\theta(z_t, t, c)$ reverses the chain conditioned on a text embedding $c_0 = \tau(y)$. A decoder $\mathcal{D}$ maps the final clean latent back to pixel space. Concretely, the forward kernel between consecutive scheduler steps is $q(z_t \mid z_{t_\mathrm{prev}}) = \mathcal{N}(\sqrt{\alpha_t}\,z_{t_\mathrm{prev}},\,\beta_t I)$, with cumulative noise schedule $\bar\alpha_t = \prod_{i \le t} \alpha_i$. From any noisy state $z_t$ the denoiser yields the Tweedie estimate of the clean latent $\hat{z}_{0,\theta}(z_t,t,c) = (z_t - \sqrt{1-\bar\alpha_t}\,\epsilon_\theta)/\sqrt{\bar\alpha_t}$, which is the model's per-step prediction of where the trajectory is heading; the corresponding deterministic DDIM~\citep{song2021ddim} reverse step writes the next state $\hat{z}_{t_\mathrm{prev}} = \sqrt{\bar\alpha_{t_\mathrm{prev}}}\,\hat{z}_{0,\theta} + \sqrt{1-\bar\alpha_{t_\mathrm{prev}}}\,\epsilon_\theta$ as a deterministic function of $z_t$ and $c$.

Two properties of this standard pipeline matter for what follows. First, the conditioning $c_0$ is computed once from the prompt and \emph{never modified} as $t$ descends, so the same embedding drives both the high-noise steps that decide global layout and the low-noise steps that paint local texture. Second, the chain back to the clean image is fully differentiable, so any frozen evaluation function can be queried as a differentiable signal on the model's per-step preview of where the trajectory is heading. Section~\ref{sec:joint_map} turns these two observations into a per-step optimization problem over $(c, z_t)$.

\subsection{PG-MAP objective}
\label{sec:joint_map}

We treat $c$ and $z_t$ as latent variables with Gaussian anchoring priors $\mathcal{N}(c;\,\mu_t,\sigma_c^2 I)$ and $\mathcal{N}(z_t;\,z_t^{\mathrm{ddim}},\sigma_z(t)^2 I)$, anchored at the unperturbed values ($\mu_t{=}c_0$; $z_t^{\mathrm{ddim}}$ is the trajectory point before refinement). The schedule-adaptive scale $\sigma_z(t){=}\gamma\sqrt{1-\bar\alpha_t}$ tracks the marginal noise scale of the diffusion process and gives a scale-invariant trust region; isotropic anchoring is a practical default backed by a low-rank covariance diagnostic (Appendix~\ref{app:noisezoo}). With skipped DDIM steps we use the conditional coefficients $a_{t\mid s}{=}\bar\alpha_t/\bar\alpha_s$ and $\beta_{t\mid s}{=}1-a_{t\mid s}$ ($s{=}t_\mathrm{prev}$); for consecutive training steps these reduce to $\alpha_t,\beta_t$. The one-step \emph{residual} that couples $c$ and $z_t$ through the denoiser is $r_t(c,z_t){=}z_t-\sqrt{a_{t\mid s}}\,\hat{z}_{s,\theta}(z_t,t,c)$, and the reward acts on the Tweedie preview $\hat{x}_0(z_t,c){=}\mathcal{D}(\hat{z}_{0,\theta})$. The full PG-MAP energy is
\begin{equation}
\begin{aligned}
\mathcal{J}_t(c, z_t) \;=\;\;
&  \underbrace{-\tfrac{1}{2\beta_{t\mid s}}\,\bigl\| r_t(c, z_t) \bigr\|^2}_{\text{forward-consistency residual }\ell_t(c,\,z_t)} \\[4pt]
&  \underbrace{-\tfrac{1}{2\sigma_c^2}\,\| c - \mu_t \|^2 \;-\; \tfrac{1}{2\sigma_z(t)^2}\,\bigl\| z_t - z_t^{\mathrm{ddim}} \bigr\|^2}_{\text{Gaussian anchoring priors }\mathcal{R}_c(c)\,+\,\mathcal{R}_z(z_t)}
   \;+\; \underbrace{\lambda\, Q\bigl(\hat{x}_0(z_t, c),\, y\bigr)}_{\text{preference reward tilt}}.
\end{aligned}
\label{eq:pgmap_objective}
\end{equation}
Because $r_t$ depends on the optimized state $z_t$ (through the denoiser), the first factor is \emph{not} the normalized transition density $q(z_t\mid \hat{z}_{s,\theta})$; equivalently, it is the log-density of a virtual zero-residual observation $u_t{=}0$ under $u_t{\mid}c,z_t\sim\mathcal{N}(r_t,\beta_{t\mid s}I)$, which has a $(c,z_t)$-independent normalizer (Appendix~\ref{app:grads}). Together with the Gaussian anchors and the reward tilt, $\mathcal{J}_t$ defines a Gibbs-MAP energy whose normalizer is independent of the candidate point and so does not affect MAP. Beyond the time-varying $\beta_{t\mid s}, \sqrt{a_{t\mid s}}, \sigma_z(t)$, the framework treats the step-dependent active set $\mathcal{A}_t \subseteq \{c, z_t\}$ and reward gate $\lambda_t = \lambda \cdot \mathbf{1}[t/T > 1-\rho_Q]$ as explicit hyperparameters whose optimal form flips between transports (\S\ref{sec:fm_extension}). CFG and PG-MAP act on different control surfaces: CFG modifies the denoiser vector field at a fixed query by mixing conditional and unconditional predictions, while PG-MAP moves the query point $(c, z_t)$ under a fixed denoiser and proximal energy. They are therefore composable, as Tuned-CFG\,$+$\,PG-MAP demonstrates empirically; CFG is not a special case of $\mathcal{J}_t$. The refined pair is $(c_t^\star, z_t^\star) = \arg\max \mathcal{J}_t$.

Figure~\ref{fig:method_overview} visualizes two specializations of $\mathcal{J}_t$ on SDXL: (a) MAP-$c$ recovers the prompt-subject identity (panda); (b) Reward-$z$ enriches local texture (galaxy). The displacement traces (c, d) reflect the framework's asymmetric prior design: constant $\sigma_c$ gives $\|c_t^\star - c_0\|$ that \emph{grows} toward the data end as the cross-attention signal sharpens (empirical $L_c$ in App.~\ref{app:proofs}); schedule-adaptive $\sigma_z(t){=}\gamma\sqrt{1-\bar\alpha_t}$ gives $\|z_t^\star - z_t^{\rm ddim}\|$ that \emph{decays} as the trust region tightens near the data end.
\begin{figure}[t]
\centering
% --- Row 1: panda trajectory (a) | (c) ‖Δc‖ trace --------------------------
\begin{minipage}[t]{0.71\linewidth}\centering\footnotesize
\parbox[t][0.42cm][t]{\linewidth}{\centering\textbf{(a) $c$-refinement rebinds prompt-subject identity.}}\\[1pt]
\begin{tikzpicture}
\node[anchor=west, inner sep=0pt] (img) at (0,0) {\includegraphics[height=3.0cm]{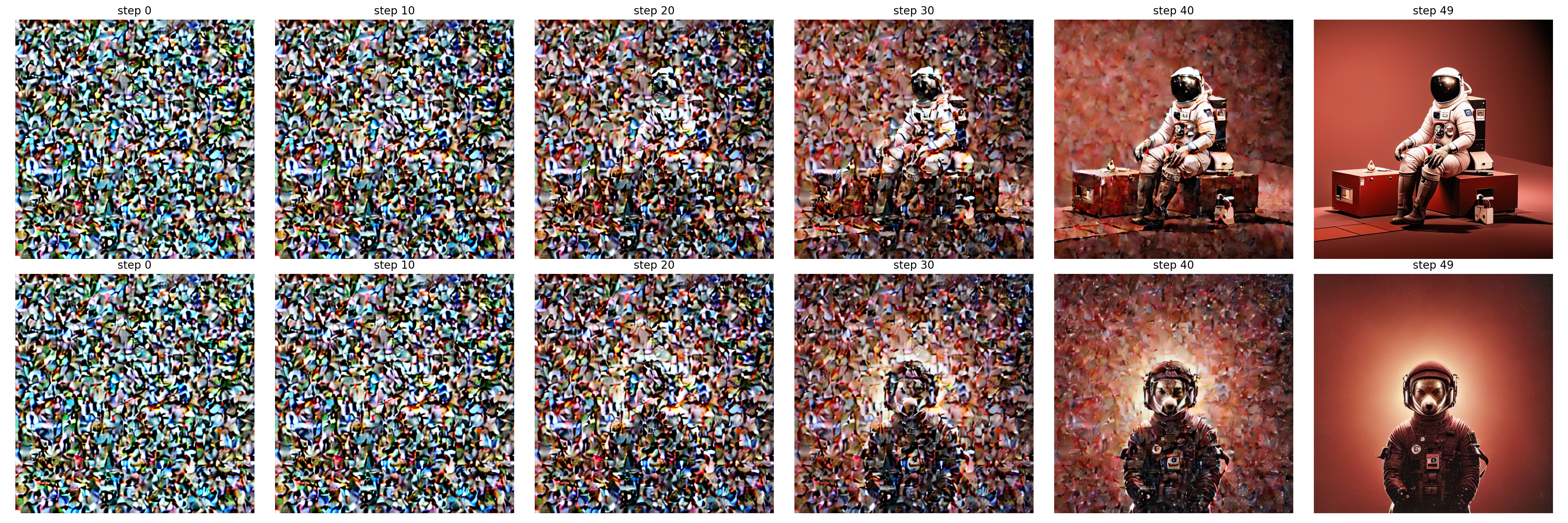}};
\node[anchor=east, font=\tiny] at ([yshift=0.75cm,xshift=-2pt]img.west) {Baseline};
\node[anchor=east, font=\tiny\bfseries] at ([yshift=-0.75cm,xshift=-2pt]img.west) {MAP-$c$};
\end{tikzpicture}
\end{minipage}\hfill
\begin{minipage}[t]{0.27\linewidth}\centering\footnotesize
\parbox[t][0.42cm][t]{\linewidth}{\centering\textbf{(c) only MAP-$c$ moves $c$.}}\\[1pt]
\includegraphics[height=3.0cm]{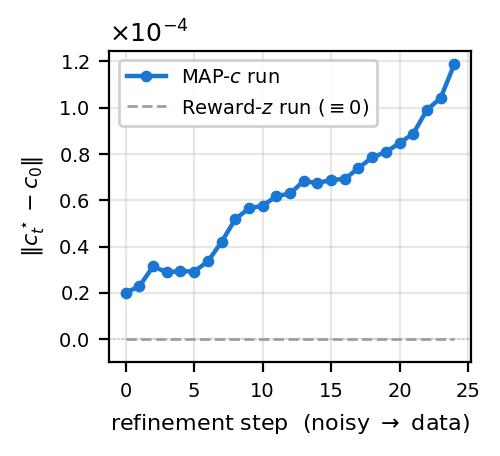}
\end{minipage}

\smallskip
{\footnotesize \emph{Prompt:} ``a cinematic photo of a red panda astronaut''. The static-CFG baseline (top of (a)) commits to a generic \emph{human} astronaut by step 30 and never recovers ``red panda''; MAP-$c$ (bottom) brings back the panda --- a clear prompt-alignment win.}

\vspace{4pt}
% --- Row 2: galaxy trajectory (b) | (d) ‖Δz‖ trace -------------------------
\begin{minipage}[t]{0.71\linewidth}\centering\footnotesize
\parbox[t][0.42cm][t]{\linewidth}{\centering\textbf{(b) $z$-refinement improves visual quality.}}\\[1pt]
\begin{tikzpicture}
\node[anchor=west, inner sep=0pt] (img) at (0,0) {\includegraphics[height=3.0cm]{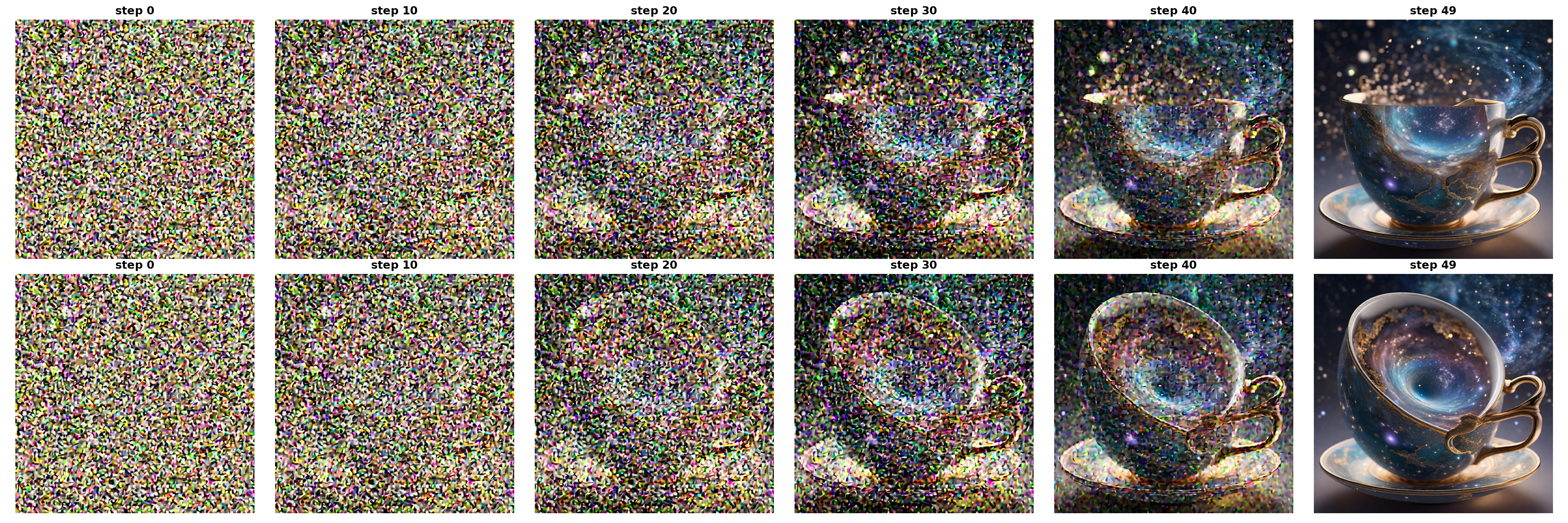}};
\node[anchor=east, font=\tiny] at ([yshift=0.75cm,xshift=-2pt]img.west) {Baseline};
\node[anchor=east, font=\tiny\bfseries] at ([yshift=-0.75cm,xshift=-2pt]img.west) {Reward-$z$};
\end{tikzpicture}
\end{minipage}\hfill
\begin{minipage}[t]{0.27\linewidth}\centering\footnotesize
\parbox[t][0.42cm][t]{\linewidth}{\centering\textbf{(d) only Reward-$z$ moves $z_t$.}}\\[1pt]
\includegraphics[height=3.0cm]{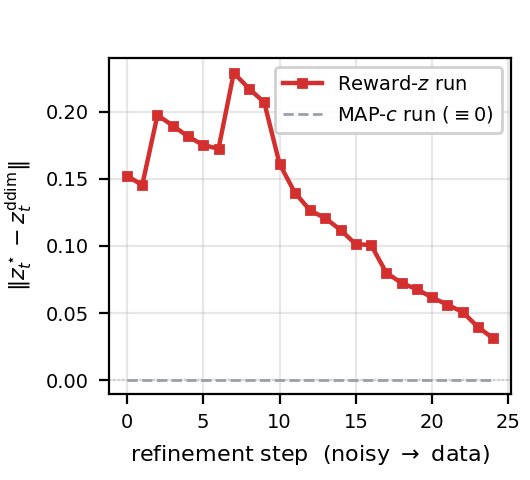}
\end{minipage}

\smallskip
{\footnotesize \emph{Prompt:} ``a tea cup with a tiny galaxy swirling inside''. Reward-$z$ (bottom of (b)) keeps the same teacup composition as the baseline but produces a richer galaxy swirl, more saturated nebula colors, and crisper porcelain reflections.}

\caption{\textbf{PG-MAP trajectory analysis on SDXL} (50 DDIM, same seed within each row). Two specializations of $\mathcal{J}_t$ target different failure modes: (a)/(c) MAP-$c$ moves only $c$ to fix prompt alignment; (b)/(d) Reward-$z$ moves only $z_t$ to lift perceptual quality. The opposite slopes of (c) (growing) and (d) (decaying) are a concrete signature of the non-stationary objective and the asymmetric, schedule-adaptive prior design (\S\ref{sec:joint_map}); on FM the active set reduces to $\mathcal{A}_t{=}\{z_t\}$ at data-side steps only (UG-FM, \S\ref{sec:fm_extension}).}
\label{fig:method_overview}
\end{figure}

\begin{remark}[Special cases of the exact inner MAP]
\label{rem:special_cases}
With the exact inner optimizer, $\sigma_z(t){\to}0$, $\lambda{=}0$ hard-anchors $z_t{=}z_t^{\rm ddim}$ and gives conditioning-only MAP; $\sigma_c{\to}0$, $\lambda{>}0$ freezes $c{=}c_0$ and gives a latent-only reward-MAP variant. Vanilla DDIM is recovered by hard-anchoring \emph{both} ends ($\sigma_c{\to}0$, $\sigma_z(t){\to}0$, $\lambda{=}0$) or, more simply, by an empty active set $\mathcal{A}_t{=}\varnothing$. Universal Guidance~\citep{bansal2023universal} is a related latent-only limit obtained by dropping the consistency residual and the latent anchor ($\sigma_z(t){\to}\infty$) so that only the reward gradient drives $z_t$. CFG is not a limit of $\mathcal{J}_t$; it modifies the denoiser vector field and is therefore composable with PG-MAP rather than subsumed by it. Among these, only the full PG-MAP exploits a non-trivial step-dependent active set $\mathcal{A}_t$, which is what enables the transport-dependent flip in \S\ref{sec:fm_extension}.
\end{remark}

\subsection{Gradients and sampler integration}
\label{sec:gradient_analysis}

Let $f_\theta(c,z_t){:=}\hat{z}_{s,\theta}(z_t,t,c)$ ($s{=}t_\mathrm{prev}$), $J_c{:=}\partial f_\theta/\partial c$, $J_z{:=}\partial f_\theta/\partial z_t$, and $r_t{=}z_t - \sqrt{a_{t\mid s}}\,f_\theta$. Differentiating Eq.~\ref{eq:pgmap_objective} gives
\begin{equation}
\nabla_c \mathcal{J}_t = \tfrac{\sqrt{a_{t\mid s}}}{\beta_{t\mid s}}\,J_c^\top r_t - \tfrac{1}{\sigma_c^2}(c-\mu_t) + \lambda\,\nabla_c Q(\hat{x}_0,y),\label{eq:grad_c}
\end{equation}
and $\nabla_{z_t}\mathcal{J}_t = \tfrac{1}{\beta_{t\mid s}}(\sqrt{a_{t\mid s}}J_z^\top - I)r_t - \tfrac{1}{\sigma_z(t)^2}(z_t-z_t^{\rm ddim}) + \lambda\,\nabla_{z_t}Q$ (App.~\ref{app:grads}). Each $\nabla Q$ requires one backward through $Q{\circ}\mathcal{D}{\circ}\hat{z}_{0,\theta}$. We approximate $(c_t^\star, z_t^\star)$ with $K$ joint ascent steps starting at $(\mu_t, z_t^{\mathrm{ddim}})$ at separate rates $\eta_c, \eta_z$ ($\eta_c{\ll}\eta_z$; defaults $\eta_c{=}10^{-4}$/$10^{-3}$ for SD~1.5/SDXL, $\eta_z{=}0.005$). The refined pair feeds the standard DDIM reverse update; Algorithm~\ref{alg:pgmap} summarizes the procedure. Stationary fixed-point identities $c_t^\star - \mu_t \propto \sigma_c^2(\cdot)$ and $z_t^\star - z_t^{\rm ddim} \propto \sigma_z(t)^2(\cdot)$ are in Appendix~\ref{app:grads}.

\begin{algorithm}[t]
\small
\caption{PG-MAP: Preference-Guided Adaptive MAP Refinement}
\label{alg:pgmap}
\begin{algorithmic}[1]
\Require Frozen $\epsilon_\theta$, frozen $Q$, prompt $y$, encoder $\tau$, VAE $\mathcal{D}$
\Require $\{\bar\alpha_t, \alpha_t, \beta_t\}$, $K$, $\eta_c, \eta_z$, $\sigma_c^2, \sigma_z^2$, $\lambda$, $\rho$, $\rho_Q$
\State $c_0 \gets \tau(y)$;\quad sample $z_T \sim \mathcal{N}(0,I)$
\For{$t = T, T{-}1, \dots, 1$}
    \If{$t/T > 1 - \rho$} \Comment{$\mathcal{A}_t = \{c, z_t\}$ (DDPM: high-noise window)}
        \State $c^{(0)} \gets c_0$;\quad $z_t^{(0)} \gets z_t$
        \State $\lambda_t \gets \lambda \cdot \mathbf{1}[t/T > 1 - \rho_Q]$ \Comment{Reward gate: $\lambda_t{>}0$ only in early sub-window}
        \For{$k = 0, \dots, K{-}1$}
            \State $\hat{z}_0 \gets \hat{z}_{0,\theta}(z_t^{(k)}, t, c^{(k)})$
            \State Compute $\nabla_c \mathcal{J}_t$ via Eq.~\eqref{eq:grad_c}; $\nabla_{z_t}\mathcal{J}_t$ analogously (App.~\ref{app:grads})
            \State $c^{(k+1)} \gets c^{(k)} + \eta_c\,\nabla_c \mathcal{J}_t$;\quad $z_t^{(k+1)} \gets z_t^{(k)} + \eta_z\,\nabla_{z_t} \mathcal{J}_t$
        \EndFor
        \State $c_t^\star \gets c^{(K)}$;\quad $z_t \gets z_t^{(K)}$
    \Else
        \State $c_t^\star \gets c_0$ \Comment{$\mathcal{A}_t = \emptyset$: standard sampler}
    \EndIf
    \State $z_{t_{\mathrm{prev}}} \gets \hat{z}_{t_{\mathrm{prev}},\theta}(z_t, t, c_t^\star)$
\EndFor
\State \Return $\hat{x} = \mathcal{D}(z_0)$
\end{algorithmic}
\end{algorithm}

\paragraph{Refinement window and SDXL adaptive prior.}
We restrict refinement to a fraction $\rho$ of denoising steps and the reward term to a sub-fraction $\rho_Q\!\le\!\rho$ (default $\rho{=}0.4$, $\rho_Q{=}0.3$ for DDPM). For SDXL~\citep{podell2024sdxl} we refine only the token-level embedding (pooled and geometry tokens fixed). The schedule-adaptive scale $\sigma_z(t){=}\gamma\sqrt{1-\bar\alpha_t}$ is empirically essential: shrinking $\gamma{\to}0$ hard-anchors $z_t$ at $z_t^{\rm ddim}$ at every step (an unrefined latent), which collapses the PickScore win rate to $10\%$ (Appendix~\ref{app:adaptive_prior}). Per-image wall-clock and a breakdown of where the cost goes are in Appendix~\ref{app:compute}.

%=============================================================================
\section{Experiments}
\label{sec:experiments}
%=============================================================================

\paragraph{Setup.} SD~1.5~\citep{rombach2022ldm} ($30$ DDIM, $s{=}7.5$) and SDXL~\citep{podell2024sdxl} ($50$, $s{=}5.0$) over full PartiPrompts ($n{=}1632$)~\citep{yu2022parti}, single seed per prompt. We evaluate with CLIPScore, PickScore~\citep{kirstain2023pickscore}, HPS\,v2~\citep{wu2023hps}, and the LAION aesthetic predictor~\citep{schuhmann2022aesthetic}; PickScore is the default optimisation reward and ImageReward~\citep{xu2023imagereward} is reported as a robustness check. Win rates with paired Wilcoxon $p$-values and bootstrap $95\%$ CIs ($1000$ resamples). \emph{Baselines}: static sampling, MAP-$c$, Reward-$z$, MAP-$cz$ ($\lambda{=}0$), Tuned-CFG~\citep{ho2022cfg} (best $w$ per metric on $n{=}489$ val), and NFE-matched Universal Guidance~\citep{bansal2023universal} ($K_{\mathrm{UG}}{=}4$, val-tuned $\eta_z^\star{=}0.1$). PG-MAP uses $\eta_z{=}0.005$ and PickScore reward at default; full per-backbone hyperparameter sweeps and defaults are in Appendix~\ref{app:ablations_sdxl}.

\subsection{Main results: PartiPrompts on diffusion backbones}
\label{sec:main_results}

\begin{table}[t]
\centering
\small
\caption{Win rates on PartiPrompts ($n{=}1632$, seed 123). \textbf{Bold} = best per column within \textbf{Ours}; \colorbox{gray!15}{gray} = recommended default \textbf{PG-MAP} (joint $(c, z_t)$ refinement with PickScore reward, $\lambda{=}0.05$); MAP-$c$, Reward-$z$, MAP-$cz$ ($\lambda{=}0$) are special cases of the same objective. $\dagger$PickScore is the optimization reward. $^*$Compare rows use val-tuned hyperparameters (full grid in App.~\ref{app:diagnostics}). Reward-model robustness rows (PG-MAP with HPS / ImageReward) and per-row Wilcoxon $p$ are deferred to App.~\ref{app:ablations_sdxl}.}
\label{tab:main_results}
\setlength{\tabcolsep}{4pt}
\begin{tabular}{llcccc}
\toprule
Method & Source & CLIP & PickScore & HPS & Aesthetic \\
\midrule
\multicolumn{6}{l}{\emph{Stable Diffusion 1.5} (30 DDIM, CFG $s{=}7.5$, $n{=}1632$)} \\
\midrule
Baseline (reference)         & -- & $50.0\%$ & $50.0\%$ & $50.0\%$ & $50.0\%$ \\
Tuned-CFG$^*$                & Compare & $52.1\%$ & $47.2\%$ & $52.7\%$ & $56.4\%$ \\
UG$^*$~\citep{bansal2023universal} & Compare & $50.7\%$ & $46.3\%$ & $46.9\%$ & $51.4\%$ \\
MAP-$c$                      & \textbf{Ours} & $51.0\%$ & $51.6\%$ & $51.0\%$ & $44.9\%$ \\
Reward-$z$                   & \textbf{Ours} & $51.3\%$ & $\mathbf{57.4\%}$ & $54.2\%$ & $54.9\%$ \\
MAP-$cz$ ($\lambda{=}0$, reward-free) & \textbf{Ours} & $49.5\%$ & $56.5\%$ & $52.6\%$ & $54.9\%$ \\
\rowcolor{gray!15}
PG-MAP$^\dagger$ \emph{(default)}     & \textbf{Ours} & $50.6\%$ & $56.8\%$ & $52.8\%$ & $54.0\%$ \\
Tuned-CFG{+}PG-MAP$^\dagger$ & \textbf{Ours} & $\mathbf{56.0\%}$ & $53.6\%$ & $\mathbf{66.0\%}$ & $\mathbf{60.2\%}$ \\
\midrule
\multicolumn{6}{l}{\emph{SDXL} (50 DDIM, CFG $s{=}5.0$, $n{=}1632$)} \\
\midrule
Baseline (reference)         & -- & $50.0\%$ & $50.0\%$ & $50.0\%$ & $50.0\%$ \\
Tuned-CFG$^*$                & Compare & $50.0\%$ & $48.2\%$ & $58.5\%$ & $52.4\%$ \\
UG$^*$~\citep{bansal2023universal} & Compare & $47.9\%$ & $48.6\%$ & $50.5\%$ & $51.1\%$ \\
MAP-$c$                      & \textbf{Ours} & $48.5\%$ & $51.4\%$ & $50.3\%$ & $49.8\%$ \\
Reward-$z$                   & \textbf{Ours} & $49.7\%$ & $55.4\%$ & $47.9\%$ & $56.7\%$ \\
MAP-$cz$ ($\lambda{=}0$, reward-free) & \textbf{Ours} & $48.8\%$ & $56.7\%$ & $47.5\%$ & $55.6\%$ \\
\rowcolor{gray!15}
PG-MAP$^\dagger$ \emph{(default)}     & \textbf{Ours} & $48.1\%$ & $\mathbf{56.4\%}$ & $47.1\%$ & $56.2\%$ \\
Tuned-CFG{+}PG-MAP$^\dagger$ & \textbf{Ours} & $\mathbf{52.8\%}$ & $51.3\%$ & $\mathbf{64.6\%}$ & $\mathbf{56.5\%}$ \\
\bottomrule
\end{tabular}
\end{table}

\textbf{Three headline observations.} \emph{(i)} The PG-MAP variants cluster at $55$--$57\%$ PickScore on both backbones (all $p{<}0.001$, bootstrap CI $[54.5, 59.3]$ on SD~1.5), gaining $+5$--$7$\,pp on PickScore / Aesthetic. \emph{(ii)} Tuned-CFG\,$+$\,PG-MAP attains HPS $\mathbf{66.0}/\mathbf{64.6}\%$ and Aesthetic $\mathbf{60.2}/\mathbf{56.5}\%$ on SD~1.5/SDXL with a $-3$--$5$\,pp PickScore trade-off (don't stack when PickScore is the deployment target). \emph{(iii)} Reward-free MAP-$cz$ tracks PG-MAP within $0.3$\,pp PickScore at $\sim 2.6\times$ lower wall-clock (Tab.~\ref{tab:compute}), a compute-light fallback when the reward backward is too expensive. \textbf{Tuning and robustness.} PG-MAP's $\eta_z{=}0.005$ is roughly $20\times$ smaller than UG's default and is paired with the schedule-adaptive prior $\sigma_z(t){=}\gamma\sqrt{1-\bar\alpha_t}$, which is load-bearing on SDXL (App.~\ref{app:adaptive_prior}). The headline does not hinge on the choice of reward model --- swapping PickScore for HPS\,v2 or ImageReward stays within $0.5$\,pp on every metric --- and multi-seed stability is $\pm 5\%$ across $5$ seeds; a BLIP-VQA alignment audit (App.~\ref{app:blip_vqa_alignment}) further confirms no text-faithfulness regression. UG step-size sweep, full reward-model rows, and multi-seed details: App.~\ref{app:diagnostics}.

\paragraph{Robustness on HPDv2.}
\label{sec:hpdv2_results}
On HPDv2~\citep{wu2023hps} ($3{,}200$ naturalistic user prompts disjoint from PartiPrompts), the PartiPrompts headline transfers: every PG-MAP row replicates within $\pm 2$\,pp on every metric, and SD3.5 UG-FM remains the strongest single-row lift. The variant ordering also carries over (MAP-$c$ alone underperforms; Reward-$z$ / MAP-$cz$ / PG-MAP cluster together). The one distribution-dependent caveat is FM-side: UG-FM's PickScore drops from PartiPrompts to HPDv2 because HPDv2's showcase prompts saturate the static baseline closer to the scorer ceiling. Full per-row table, per-style breakdown, and the saturation analysis are in Appendix~\ref{app:hpdv2_full}.

\subsection{Extension to flow matching: SD3.5-medium}
\label{sec:fm_extension}

To test whether the framework crosses transport families, we instantiate $\mathcal{J}_t$ on a flow-matching backbone, SD3.5-medium~\citep{esser2024sd3}. Three transport-specific substitutions follow mechanically from the FM forward process: \emph{(i)} the DDIM consistency residual becomes a one-step Euler ODE residual; \emph{(ii)} the Tweedie estimate is replaced by the FM endpoint $\hat{x}_1{=}z_t{-}(1{-}t)\,v_\theta$ (diffusers sign convention); \emph{(iii)} the schedule-adaptive latent prior switches from $\sigma_z(t){=}\gamma\sqrt{1{-}\bar\alpha_t}$ to $\sigma_z(t){=}\gamma(1{-}t)$ to track the FM noise scale. A bitwise identity-refine audit against the official SD3.5 pipeline passes at $0/255$ pixel deviation, so any difference reported below is attributable to the refinement step alone (full derivation and sign conventions in App.~\ref{app:fm_extension}).

\begin{table}[h]
\centering\small
\caption{Flow-matching headline + FlowChef head-to-head. Win rates vs.\ SD3.5-medium static baseline at same seed, data-side gate; one-sided Wilcoxon $^{***}p{<}10^{-100}$, $^{**}p{<}10^{-10}$, $^{*}p{<}0.05$. \colorbox{gray!15}{gray} = headline \textbf{UG-FM} ($K_{UG}{=}4$, $\eta_z{=}0.1$, full backprop through $v_\theta$). FlowChef~\citep{patel2025flowchef} (gradient skipping, $\eta^\star{=}1.0$ from $n{=}200$ val sweep): \emph{always-on} = skipping throughout; \emph{gating-matched} = skipping restricted to UG-FM's data-side window. The $16.9$\,pp PS gap (gating-matched vs UG-FM) isolates the full-backprop axis (CLIP $p{=}9.1{\times}10^{-4}$). Mechanism: App.~\ref{app:fm_extension}.}
\label{tab:fm_main}
\setlength{\tabcolsep}{4pt}
\begin{tabular}{llccccc}\toprule
Method & Source & $n$ & PickScore & Aesthetic & HPS & CLIP\\\midrule
\multicolumn{7}{l}{\emph{SD3.5-medium} (28 step rectified-flow Euler, cfg 7.0, 1024$^2$)}\\\midrule
Baseline (reference)            & --              & $1632$           & $50.0\%$         & $50.0\%$    & $50.0\%$         & $50.0\%$\\
FlowChef (always-on)        & Compare & $1632$ & $82.4\%$ & $49.7\%$ & $68.1\%$ & $53.9\%$\\
FlowChef (gating-matched)   & Compare & $1632$ & $75.0\%$ & $46.9\%$ & $62.5\%$ & $52.9\%$\\
\rowcolor{gray!15}
\textbf{UG-FM}                  & \textbf{Ours}   & $1632$           & $\mathbf{91.9\%}^{***}$ & $\mathbf{51.7\%}^{*}$ & $\mathbf{75.7\%}^{***}$ & $\mathbf{54.2\%}^{***}$\\
\bottomrule
\end{tabular}
\end{table}

\paragraph{Result and mechanism.} A local perturbation analysis suggests the active set should collapse to $\{z_t\}$ alone, restricted to the data-side window; we call this variant \textbf{UG-FM} and obtain the FM headline in Tab.~\ref{tab:fm_main} ($91.9\%$ PS / $75.7\%$ HPS at $n{=}1632$). Two transport-specific reasons motivate why the conditioning branch and the noise-side window drop out. \textbf{(i) Conditioning capacity.} SD3.5's concatenated CLIP-L / CLIP-G / T5-XXL representation has $\sim 1.4$M optimizable parameters, so a unit-normalized $c$-gradient is spread too thinly to move any single direction. \textbf{(ii) Local Euler amplification.} The deterministic FM ODE linearizes as $\delta z^{(K)} \approx \prod_j (I + \Delta t_j\,\partial_z v_\theta)\,\delta z^{(k_0)}$: a noise-side perturbation traverses $\sim 25$ factors and grows $5$--$50\times$ in our diagnostics, while a data-side perturbation has only $1$--$3$ remaining factors and stays bounded (sub-pixel mean RMSE $0.61/255$). On DDPM the schedule-adaptive prior $\sigma_z(t){=}\gamma\sqrt{1-\bar\alpha_t}$ implicitly tracks this product (we use deterministic DDIM throughout). The active set $\mathcal{A}_t$ thus flips between transports --- diffusion refines early at high noise, flow matching refines late at the data end (full Jacobian-product diagnostics in App.~\ref{app:fm}).

\paragraph{Ruling out a scorer artefact.} Three controls rebut the worry that PickScore rewards any latent perturbation. \emph{(1) Gaussian-noise control}: equal-magnitude Gaussian noise added to baseline images reaches only $62.5\%$ PS and a sub-chance $44.5\%$ HPS, so UG-FM is $+29.4$/$+31.2$\,pp ahead on PS / HPS. \emph{(2) Spectrum and magnitude}: the UG-FM perturbation is sub-pixel ($0.61/255$ mean RMSE) and low/mid-frequency-dominant rather than flat-spectrum white noise. \emph{(3) Independent BLIP-VQA audit} ties baseline ($99.8\%$ ties), so the gain is not paid in text faithfulness. Five-seed stability $91.0\%{\pm}8.2\%$ PS at $n{=}20$ (App.~\ref{app:fm}).

\paragraph{Head-to-head FM baseline (FlowChef): full-backprop ablation.} Replacing UG-FM's full backprop through $v_\theta$ with FlowChef's gradient-skipping costs $\sim 9.5$\,pp PS on the always-on variant ($82.4\%$ vs.\ $91.9\%$) and widens to $\mathbf{16.9}$\,\textbf{pp} when gating is matched ($\mathbf{75.0\%}$ vs.\ $91.9\%$, $p{<}10^{-91}$; HPS $62.5\%$ vs.\ $75.7\%$, $p{<}10^{-28}$): the Jacobian factor $I-(1-t)\partial_z v_\theta$ that gradient skipping discards is the load-bearing axis.

\subsection{Human evaluation}
\label{sec:human_eval}

We conducted a human evaluation on $62$ PartiPrompts pairs ($100$ raters, $6{,}200$ pairwise judgments) comparing PG-MAP ($\lambda{=}0.05$) against three baselines on SDXL. PG-MAP is preferred on every comparison (Tab.~\ref{tab:human_eval}); the lift is largest against the compute-matched UG baseline ($\sim 2{:}1$ wins), confirming that the framework wins outside its own optimizer metric and that the $5$--$7$\,pp lift on auto-metrics also registers as a perceptual preference. Study design, IRB status, and tie-rate breakdown are in Appendix~\ref{app:human_eval}.

\begin{table}[h]
\centering\small
\caption{Human-evaluation pairwise win rates (SDXL, $62$ PartiPrompts pairs, $100$ raters, $6{,}200$ judgments). Rate = PG-MAP wins / (PG-MAP wins + baseline wins); ties excluded. Two-sided binomial $p$.}
\label{tab:human_eval}
\setlength{\tabcolsep}{6pt}
\begin{tabular}{lccc}
\toprule
Comparison & $n_{\text{decisive}}$ & PG-MAP win rate & two-sided $p$ \\
\midrule
vs.\ SDXL static                 & $1{,}458$ & $\mathbf{60.2\%}$ & $5.9{\times}10^{-15}$ \\
vs.\ Tuned-CFG ($w^\star{=}7.5$) & $1{,}883$ & $\mathbf{56.0\%}$ & $1.8{\times}10^{-7}$  \\
vs.\ NFE-matched UG              & $1{,}794$ & $\mathbf{66.8\%}$ & $1.5{\times}10^{-46}$ \\
\bottomrule
\end{tabular}
\end{table}

\subsection{CRR-MAP oracle-routing diagnostic}
\label{sec:crr_results}
\label{sec:cz_analysis}

The same MAP objective $\mathcal{J}_t$ from \S\ref{sec:joint_map} yields several variants by setting different ablation flags. We compare three of them, all special cases of the unified PG-MAP objective:
\textbf{$f_{\text{c}}$} (MAP-$c$, $\sigma_z{\to}0$, $\lambda{=}0$) is strongest on attribute-binding and short / typography prompts;
\textbf{$f_{\text{cz}}$} (MAP-$cz$, $\lambda{=}0$, reward-free) is the cheapest joint variant;
\textbf{$f_{\text{tcfg}}$} (Tuned-CFG\,$+$\,PG-MAP, $\lambda{=}0.05$) is strongest on atmospheric / artistic scenes. A 4-prompt SDXL case study (Appendix~\ref{app:cz_extended}) shows the three have prompt-type-dependent strengths; to check whether this routing potential carries to population scale, we measure the per-prompt oracle ceiling over the same pool on the full $n{=}1632$ PartiPrompts split. The oracle dispatches each prompt to the candidate maximizing the within-prompt rank-sum across the four metrics; alternative aggregates (PS-only, CLIP-only, per-prompt Pareto-sum) are in Appendix~\ref{app:crr_oracle_variants}. Because the oracle uses ground-truth metric scores, it is a diagnostic upper bound, not a deployable method.

\begin{table}[h]
\centering\small
\caption{CRR-MAP oracle win rates on PartiPrompts ($n{=}1632$, seed 123). The oracle row is the per-prompt argmax over $\{f_{\text{c}}, f_{\text{cz}}, f_{\text{tcfg}}\}$ under the within-prompt four-metric rank-sum aggregate (\emph{Balanced rank} in App.~\ref{app:crr_oracle_variants}), providing an upper bound of any selector restricted to the same pool.}
\label{tab:crr_results}
\setlength{\tabcolsep}{4pt}
\begin{tabular}{llcccc}
\toprule
Method & Source & CLIP & PickScore & HPS & Aesthetic \\
\midrule
\multicolumn{6}{l}{\emph{Stable Diffusion 1.5} (30 DDIM, CFG $s{=}7.5$, $n{=}1632$)} \\
\midrule
MAP-$c$ \ ($f_{\text{c}}$)              & \textbf{Ours} & $49.9\%$ & $51.5\%$ & $51.3\%$ & $49.3\%$ \\
MAP-$cz$ ($f_{\text{cz}}$)              & \textbf{Ours} & $51.5\%$ & $53.6\%$ & $50.9\%$ & $47.3\%$ \\
Tuned-CFG{+}PG-MAP ($f_{\text{tcfg}}$)  & \textbf{Ours} & $56.0\%$ & $53.6\%$ & $66.0\%$ & $60.2\%$ \\
\rowcolor{gray!15}
CRR-MAP \emph{(oracle, diagnostic)}     & \textbf{Ours} & $\mathbf{65.6\%}$ & $\mathbf{75.2\%}$ & $\mathbf{76.9\%}$ & $\mathbf{66.7\%}$ \\
\midrule
\multicolumn{6}{l}{\emph{SDXL} (50 DDIM, CFG $s{=}5.0$, $n{=}1632$)} \\
\midrule
MAP-$c$ \ ($f_{\text{c}}$)              & \textbf{Ours} & $48.5\%$ & $51.4\%$ & $50.3\%$ & $49.8\%$ \\
MAP-$cz$ ($f_{\text{cz}}$, reward-free) & \textbf{Ours} & $48.6\%$ & $56.2\%$ & $47.2\%$ & $57.0\%$ \\
Tuned-CFG{+}PG-MAP ($f_{\text{tcfg}}$)  & \textbf{Ours} & $52.8\%$ & $51.3\%$ & $64.6\%$ & $56.5\%$ \\
\rowcolor{gray!15}
CRR-MAP \emph{(oracle, diagnostic)}     & \textbf{Ours} & $\mathbf{63.8\%}$ & $\mathbf{72.7\%}$ & $\mathbf{73.5\%}$ & $\mathbf{68.2\%}$ \\
\bottomrule
\end{tabular}
\end{table}

Tab.~\ref{tab:crr_results}: per-prompt oracle routing adds $+5$--$14$\,pp on every metric and both backbones over the best fixed variant, indicating that the prompt-type split holds at population scale and that per-prompt selection is a useful extension to the framework. Preliminary CLIP-prototype and linear-probe router heads close part of this gap from the prompt-text signal alone; a learned image-conditioned router is the natural follow-up. On FM the same diagnostic over UG-FM operating regimes ($\eta_z$) adds $+4.5/+10.1/+10.6$\,pp on HPS / CLIP / Aesthetic. Detailed setup, dispatch percentages, FM CRR-MAP, and failure-case breakdown are in Appendix~\ref{app:crr_full}.

%=============================================================================
\section{Related Work}
\label{sec:related}
%=============================================================================

\paragraph{Inference-time guidance.} CFG~\citep{ho2022cfg}, Universal Guidance~\citep{bansal2023universal}, and FreeDoM~\citep{yu2023freedom} steer DDPM samplers via score amplification or latent gradient ascent. FM-side per-step latent guidance includes D-Flow~\citep{benhamu2024dflow}, FlowChef~\citep{patel2025flowchef}, ITOC~\citep{chang2025itoc}, \citet{ouyang2026alignment}, and \citet{feng2025guidance}; concurrent SMC / multi-preference variants GLASS-Flows~\citep{glassflows2025} and Diffusion Blend~\citep{diffusionblend2025} are orthogonal to per-step gradient-based MAP. Among these, FlowChef is closest to UG-FM (\S\ref{sec:fm_extension}); the head-to-head comparison isolating the full-backprop-through-$v_\theta$ axis is in Tab.~\ref{tab:fm_main}.

\paragraph{Closest prior on joint $(c, z_t)$ optimization.}
PNO~\citep{peng2024pno} optimizes prompt embedding plus initial noise $z_T$ for safety, using a single trajectory-start perturbation with no proximal-MAP / forward-consistency framing and no FM analysis. Concurrent DATE~\citep{na2025date} performs gradient-based per-step text-embedding refinement (close to our MAP-$c$ variant but not derived from a unified MAP objective), and DNO~\citep{tang2025dno} performs latent-only inference-time reward optimization with high-dimensional probability regularization (close to our Reward-$z$ variant but using a different stay-on-manifold regularizer). ReNO~\citep{eyring2024reno} targets one-step distilled T2I models and is out of scope for our 28--50-step regime. None provides the unified joint $(c, z_t)$ MAP framing or the transport-dependent flow-matching analysis of PG-MAP.

\paragraph{Attention, prompt search, alignment.} Prompt-to-Prompt~\citep{hertz2022prompt2prompt} and Attend-and-Excite~\citep{chefer2023attend} edit cross-attention maps; PG-MAP refines the embedding upstream of cross-attention. Textual inversion~\citep{gal2023textualinversion}, DreamBooth~\citep{ruiz2023dreambooth}, and PEZ~\citep{wen2024pez} operate offline; PG-MAP optimizes continuous $c$ per inference step. SDS~\citep{poole2023dreamfusion} shares the frozen-denoiser-backprop structure. Diffusion-DPO~\citep{wallace2023diffusiondpo} fine-tunes $\theta$ on preference data; PG-MAP is complementary. A side-by-side comparison matrix across all six closest baselines (UG / PNO / DATE / DNO / FlowChef / ReNO) on five axes --- joint $(c, z_t)$, forward-consistency, FM compatibility, T2I scope, per-step --- is in Appendix~\ref{app:landscape}, Tab.~\ref{tab:comparison}.

%=============================================================================
\section{Limitations}
\label{sec:limitations}
%=============================================================================

PG-MAP has known limitations. First, the latent perturbation appears largely independent of CLIPScore (text alignment), even in the reward-free $\lambda{=}0$ MAP-$cz$ variant; deployments prioritising strict text faithfulness should compose with Tuned-CFG, which recovers CLIPScore at a small BLIP-VQA cost ($\sim -0.7$\,pp; App.~\ref{app:blip_vqa_alignment}). Second, conditioning-side optimisation helps most on attribute-binding and short / typography prompts (\S\ref{sec:cz_analysis}); the CRR-MAP oracle (\S\ref{sec:crr_results}) suggests a further $+5$--$14$\,pp is available from per-prompt routing, with prompt-text-only routers closing only part of that gap --- an image-conditioned router and an amortised $\pi_\phi$ predictor for the per-step inner loop are the natural next steps. Additional items (non-concavity, compute overhead, reward in-distribution evaluation on SD~1.5) are in Appendix~\ref{app:limitations_extended}.

%=============================================================================
\section*{Reproducibility statement}
%=============================================================================

All methods are implemented atop the public Hugging Face \texttt{diffusers} library; backbones, reward models, and PartiPrompts are publicly licensed. The code is publicly released at \url{https://github.com/sophialanlan/PG-MAP}, including the PG-MAP reference implementation, evaluation scripts, exact PartiPrompts split and seeds, per-row configurations, and the full generated-image set. The fixed-seed deterministic DDIM/FM sampler is bit-exact reproducible on identical hardware (RTX PRO 6000 Blackwell); cross-GPU reproducibility (A100, H100) is within bootstrap CI half-width.

%=============================================================================
\section*{Ethics, broader impact, and use of LLMs}
%=============================================================================

PG-MAP reuses frozen generative and preference networks at inference time without retraining, so it inherits the safety properties of the underlying backbone and amplifies whatever demographic and cultural priors the frozen preference scorer encodes (we recommend pairing with bias audits in user-facing systems). The volunteer human-evaluation study (\S\ref{sec:human_eval}) collected no PII and was IRB-exempt; selection bias is documented in Appendix~\ref{app:human_eval}. We used an LLM (Claude) for copy-editing and standard utility code; the research design, method, theorems, experiments, and numerical results are the authors' own, with all LLM-generated text and code reviewed before inclusion.

%=============================================================================
\section{Conclusion}
\label{sec:conclusion}
%=============================================================================

We presented PG-MAP, which formulates inference-time alignment as a \textbf{trajectory-level Gibbs-MAP / proximal energy optimization} rather than a static, single-axis control mechanism. The framework instantiates each denoising step as a time-dependent energy on $(c, z_t)$ with forward-consistency residual and schedule-adaptive anchoring priors, recovering Universal-Guidance-style latent updates, MAP-$c$, and Reward-$z$ as analytic special cases and composing with CFG; joint coupling and non-stationary scheduling, rather than larger step sizes or stronger reward signals, emerge as the load-bearing ingredients. Our analysis further suggests that joint optimization is \textbf{transport-dependent}: diffusion benefits from coordinated $(c, z_t)$ refinement at the high-noise end, while flow matching reduces to a latent-only regime at the data end --- a hypothesis motivated by a local perturbation analysis with diagnostic support and confirmed by the UG-FM variant. We hope this work motivates a shift from static guidance heuristics toward \textbf{dynamic, trajectory-aware optimization} as a default design principle for inference-time alignment in generative models.

\begin{ack}
The authors thank the participants of the volunteer human-evaluation study for their time. Funding and competing interests will be disclosed in the camera-ready version.
\end{ack}

\bibliographystyle{plainnat}
\bibliography{references}

%=============================================================================
\appendix
%=============================================================================

\section{Mathematical foundations}
\label{app:math}

\subsection{Reward models, gradient derivations, and reward chain rule}
\label{app:reward_models}

\paragraph{Reward models.} PickScore~\citep{kirstain2023pickscore} is a CLIP-based scorer trained on the Pick-a-Pic dataset of human pairwise preferences (~500k pairs). HPS\,v2~\citep{wu2023hps} similarly trains on human preference data with an improved encoder; ImageReward~\citep{xu2023imagereward} (NeurIPS 2023) adds text-faithfulness annotations on top of preference labels. The LAION aesthetic predictor~\citep{schuhmann2022aesthetic} is a small MLP head over CLIP features regressed against curated aesthetic ratings. All four are publicly available frozen models that accept an image $x$ and prompt $y$ and return a scalar $Q(x, y) \in \mathbb{R}$, differentiable with respect to the image input.

\paragraph{On the term ``forward-consistency residual''.}
Eq.~\ref{eq:pgmap_objective} writes $\ell_t$ as a Gaussian penalty on the one-step residual $r_t(c,z_t)=z_t-\sqrt{a_{t\mid s}}\hat{z}_{s,\theta}(z_t,t,c)$, where $s=t_\mathrm{prev}$ and $a_{t\mid s}=\bar\alpha_t/\bar\alpha_s$, $\beta_{t\mid s}=1-a_{t\mid s}$ are the DDIM-skipped conditional coefficients (for consecutive scheduler steps these reduce to $\alpha_t,\beta_t$). Because $\hat{z}_{s,\theta}$ depends on the optimized state $z_t$, $\ell_t$ is \emph{not} the normalized transition density $q(z_t\mid \hat{z}_{s,\theta})$; equivalently, it is the log-density of a virtual zero-residual observation $u_t{=}0$ under $u_t\mid c,z_t \sim \mathcal{N}(r_t(c,z_t),\beta_{t\mid s}I)$, whose normalizer is $(c,z_t)$-independent. We therefore call $\mathcal{J}_t$ a Gibbs-MAP energy and $\ell_t$ a residual factor; we do not claim the unnormalized density $\exp[\ell_t(c,z)]$ is a posterior over $z$.

\paragraph{Reward chain rule.}
\label{app:grads}
With $f_\theta(c,z_t):=\hat{z}_{s,\theta}(z_t,t,c)$, $J_c:=\partial f_\theta/\partial c$, $J_z:=\partial f_\theta/\partial z_t$, and $r_t:=z_t-\sqrt{a_{t\mid s}}f_\theta$, the preference gradients factor as
\begin{align}
\nabla_c Q(\hat{x}_0, y) &= \tfrac{\partial \hat{z}_0}{\partial c}^\top \tfrac{\partial \mathcal{D}}{\partial \hat{z}_0}^\top \nabla_x Q,\quad
\nabla_{z_t} Q(\hat{x}_0, y) = \tfrac{\partial \hat{z}_0}{\partial z_t}^\top \tfrac{\partial \mathcal{D}}{\partial \hat{z}_0}^\top \nabla_x Q,
\end{align}
where $\nabla_x Q$ is the reward gradient with respect to the decoded image; both come from a single backward pass through $Q \circ \mathcal{D} \circ \hat{z}_{0,\theta}$.

\paragraph{Full gradients of $\mathcal{J}_t$.}
Differentiating the residual gives $D_c r_t = -\sqrt{a_{t\mid s}}J_c$ and $D_z r_t = I-\sqrt{a_{t\mid s}}J_z$. Therefore
\begin{align}
\nabla_c \mathcal{J}_t
&= \tfrac{\sqrt{a_{t\mid s}}}{\beta_{t\mid s}} J_c^\top r_t - \tfrac{1}{\sigma_c^2}(c-\mu_t) + \lambda\,\nabla_c Q,
\label{eq:grad_c_app}\\
\nabla_{z_t} \mathcal{J}_t
&= \tfrac{1}{\beta_{t\mid s}}\bigl(\sqrt{a_{t\mid s}}J_z^\top - I\bigr) r_t - \tfrac{1}{\sigma_z(t)^2}(z_t-z_t^{\rm ddim}) + \lambda\,\nabla_{z_t} Q.
\label{eq:grad_z_app}
\end{align}

\paragraph{Stationary fixed-point equations.}
At an interior stationary point $(c_t^\star, z_t^\star)$, setting Eqs.~\eqref{eq:grad_c_app}--\eqref{eq:grad_z_app} to zero gives
\begin{align}
c_t^\star - \mu_t &= \sigma_c^2 \left[\tfrac{\sqrt{a_{t\mid s}}}{\beta_{t\mid s}} J_c^\top r_t + \lambda\,\nabla_c Q\right]_{(c_t^\star, z_t^\star)},
\label{eq:fixed_point_c}\\
z_t^\star - z_t^{\rm ddim} &= \sigma_z(t)^2 \left[\tfrac{1}{\beta_{t\mid s}}\bigl(\sqrt{a_{t\mid s}}J_z^\top - I\bigr) r_t + \lambda\,\nabla_{z_t} Q\right]_{(c_t^\star, z_t^\star)}.
\label{eq:fixed_point_z}
\end{align}
The displacement on each side is proportional to its respective prior variance (trust-region interpretation). These are stationary identities at an exact optimum; Algorithm~\ref{alg:pgmap} approximates them with $K$ gradient-ascent iterates and is therefore a finite-step approximation rather than a closed-form proximal solver.

\paragraph{SDXL specialization.}
\label{app:sdxl}
SDXL~\citep{podell2024sdxl} concatenates two text-encoder streams (CLIP-L + OpenCLIP-G) and adds auxiliary signals (pooled embedding $p \in \mathbb{R}^{d_p}$, geometry tokens $u \in \mathbb{R}^{d_u}$). We refine only the token-level embedding sequence $c$ and the latent $z_t$, holding $p, u$ fixed: $(c_t^\star, z_t^\star) = \arg\max_{c, z_t}\;\mathcal{J}_t(c, z_t;\,\mu_t, z_t^{\mathrm{ddim}}, p, u)$. Empirically, refining $p$ jointly leads to mode-shift artifacts (see Appendix~\ref{app:ablations_sd15}).

\paragraph{Adaptive latent-prior derivation.}
\label{app:adaptive_prior}
The forward kernel $q(z_t \mid z_0)$ has variance $(1{-}\bar\alpha_t)\,I$; a Gaussian latent prior with variance proportional to this kernel naturally tracks the noise scale of the diffusion process. Setting $\sigma_z(t) = \gamma\sqrt{1{-}\bar\alpha_t}$ scales the trust region to $\gamma$ times the marginal noise standard deviation.

\subsection{Proofs and bounded-displacement properties}
\label{app:proofs}

\begin{proposition}[Baseline recovery for the exact inner optimizer]
\label{prop:baseline_recovery}
Fix a scheduler step $t$ with $s=t_\mathrm{prev}$, and let $H_t(c,z)=-\|r_t(c,z)\|^2/(2\beta_{t\mid s})$. Assume (i) $H_t$ is finite at the anchor $(\mu_t, z_t^{\rm ddim})$, (ii) the reward is bounded above, $Q(\hat{x}_0(z,c),y)\le B_Q$, and (iii) for all sufficiently small $\sigma_c,\sigma_z$, $\mathcal{J}_t$ has a global maximizer $(c_\sigma^\star, z_\sigma^\star)$. If $\lambda$ is bounded as $\sigma_c, \sigma_z\to 0$, then $(c_\sigma^\star, z_\sigma^\star) \to (\mu_t, z_t^{\rm ddim})$. Consequently, if this exact inner MAP solution is used at every step and the reverse update is continuous, the generated trajectory converges to the vanilla DDIM trajectory.
\end{proposition}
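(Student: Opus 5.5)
The plan is to compare the value of $\mathcal{J}_t$ at the maximizer with its value at the anchor, and then let the prior terms force the maximizer back to the anchor.

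\textbf{Step 1: comparison inequality.} Write $\mathcal{J}_t(c,z)=H_t(c,z)-\frac{1}{2\sigma_c^2}\|c-\mu_t\|^2-\frac{1}{2\sigma_z^2}\|z-z_t^{\rm ddim}\|^2+\lambda Q$. By optimality of $(c_\sigma^\star,z_\sigma^\star)$, assumption (iii),
\[
\mathcal{J}_t(c_\sigma^\star,z_\sigma^\star)\ge \mathcal{J}_t(\mu_t,z_t^{\rm ddim}).
\]
The right-hand side contains no prior penalty, so it equals $H_t(\mu_t,z_t^{\rm ddim})+\lambda Q(\text{anchor})$. This is finite by (i), and $Q$ evaluated at a fixed point is a finite number.

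\textbf{Step 2: uniform bounds.} Use $H_t\le 0$ (it is minus a squared norm) and $\lambda Q\le |\lambda| B_Q$ at the maximizer. Here I need a sign convention for $\lambda$. The paper uses $\lambda\ge 0$; with that, (ii) gives the upper bound directly. For the anchor value I need a lower bound $\lambda Q(\text{anchor})\ge -|\lambda|\,|Q(\text{anchor})|$, which is bounded because $\lambda$ is bounded. Rearranging Step 1 then gives
\[
\frac{1}{2\sigma_c^2}\|c_\sigma^\star-\mu_t\|^2+\frac{1}{2\sigma_z^2}\|z_\sigma^\star-z_t^{\rm ddim}\|^2\le M,
\]
where $M:=\sup_\sigma\bigl(\lambda B_Q-H_t(\text{anchor})-\lambda Q(\text{anchor})\bigr)<\infty$. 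Hence $\|c_\sigma^\star-\mu_t\|\le\sigma_c\sqrt{2M}$ and $\|z_\sigma^\star-z_t^{\rm ddim}\|\le\sigma_z\sqrt{2M}$, both of which tend to $0$. As a bonus this gives an explicit rate, proportional to the prior standard deviation, consistent with the fixed-point identities \eqref{eq:fixed_point_c}--\eqref{eq:fixed_point_z}.

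\textbf{Step 3: trajectory consequence.} Argue by induction over the finite set of scheduler steps, from $T$ down. At each step, the reverse update $z_{s}=\hat z_{s,\theta}(z_t^\star,t,c_t^\star)$ is continuous in its inputs. The refined inputs converge to the anchor $(\mu_t,z_t^{\rm ddim})=(c_0,z_t)$, so the next state converges to the vanilla DDIM state. That state is the next anchor $z_s^{\rm ddim}$, which therefore also converges.

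\textbf{Main obstacle.} The subtle point is that the anchor $z_s^{\rm ddim}$ itself depends on $\sigma$ through earlier refinements. The constant $M$ must therefore be uniform along a convergent sequence of anchors. My first approach is to require $H_t$ and $Q$ to be continuous near the vanilla trajectory, so that $H_t(\text{anchor})$ stays bounded along the sequence. Then the Step 2 bound holds uniformly, and the maximizer still lies within $O(\sigma)$ of a convergent anchor. Finitely many steps then yield convergence of the whole trajectory, and of the final image after applying the continuous decoder $\mathcal{D}$.
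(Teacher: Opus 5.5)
Your Steps 1--2 are the paper's argument: compare $\mathcal{J}_t$ at the maximizer with its value at the anchor, then use $H_t\le 0$, $Q\le B_Q$ and $0\le\lambda\le\bar\lambda$. This gives $\|c_\sigma^\star-\mu_t\|^2\le 2M\sigma_c^2$ and $\|z_\sigma^\star-z_t^{\rm ddim}\|^2\le 2M\sigma_z^2$, where your $M$ is the paper's $C_t=-H_t(x_0)+\bar\lambda(B_Q-Q(x_0))$.

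Your Step 3 goes beyond the paper. The paper proves only the fixed-anchor bound and asserts the trajectory claim with a single ``Consequently''. Your induction correctly notices that, for every step after the first, the anchor $z_t^{\rm ddim}$ depends on $\sigma$ through earlier refinements. The constant therefore has to be controlled uniformly along a convergent sequence of anchors, and the uniformity issue you flag is real.

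Of the two continuity requirements you add, the one on $H_t$ already follows from the hypotheses. The residual $r_t(c,z)=z-\sqrt{a_{t\mid s}}\,\hat{z}_{s,\theta}(z,t,c)$ is built from the same map as the reverse update, which is assumed continuous. The genuinely extra ingredient is a local lower bound on $z\mapsto Q(\hat{x}_0(z,\mu_t),y)$ near the vanilla DDIM states. Assumptions (i)--(iii) do not supply this, because the reward is only bounded above.

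The paper's corollary tacitly needs the same lower bound. It holds in the paper's setting because $Q\circ\mathcal{D}\circ\hat{z}_{0,\theta}$ is differentiable. With that made explicit, your proof is correct and more complete than the paper's.
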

\begin{proof}
Let $x_0=(\mu_t,z_t^{\rm ddim})$, $x_\sigma=(c_\sigma^\star, z_\sigma^\star)$, and $D_\sigma(x)=\|c-\mu_t\|^2/(2\sigma_c^2) + \|z-z_t^{\rm ddim}\|^2/(2\sigma_z^2)$. Optimality gives $\mathcal{J}_t(x_\sigma)\ge\mathcal{J}_t(x_0)$. Since $D_\sigma(x_0)=0$ and $H_t \le 0$,
$D_\sigma(x_\sigma) \le H_t(x_\sigma)-H_t(x_0)+\bar\lambda\{Q(x_\sigma)-Q(x_0)\} \le -H_t(x_0)+\bar\lambda(B_Q-Q(x_0))=:C_t$,
where $\bar\lambda$ bounds $\lambda$. Therefore $\|c_\sigma^\star-\mu_t\|^2 \le 2C_t\sigma_c^2$ and $\|z_\sigma^\star-z_t^{\rm ddim}\|^2 \le 2C_t\sigma_z^2$, both vanishing as $\sigma\to 0$.
\end{proof}
\noindent\emph{Algorithmic caveat.} The $K{=}1$ or $K{=}2$ gradient-ascent sampler in Algorithm~\ref{alg:pgmap} does not by itself recover DDIM as $\sigma_c,\sigma_z\to0$ unless one of: (a) the active set is empty, (b) step sizes shrink with the prior variances ($\eta_c{=}O(\sigma_c^2)$, $\eta_z{=}O(\sigma_z^2)$), or (c) a proximal/trust-region update is used. We do not claim algorithmic baseline recovery beyond the active-set route used in Algorithm~\ref{alg:pgmap}.

\begin{proposition}[Local stationary-point displacement bound]
\label{prop:bounded_displacement}
Let $(c_t^\star, z_t^\star)$ be an interior stationary point of $\mathcal{J}_t$. Suppose at this point $\|J_c\|_{\rm op}\le L_c$, $\|J_z\|_{\rm op}\le L_z$, $\|r_t\|\le R_t$, $\|\nabla_c Q\|\le G_c^Q$, $\|\nabla_{z_t}Q\|\le G_z^Q$. Then
\begin{align}
\|c_t^\star - \mu_t\| &\le \sigma_c^2\!\left(\tfrac{\sqrt{a_{t\mid s}}\,L_c\,R_t}{\beta_{t\mid s}} + \lambda G_c^Q\right),\\
\|z_t^\star - z_t^{\mathrm{ddim}}\| &\le \sigma_z(t)^2\!\left(\tfrac{(1+\sqrt{a_{t\mid s}}L_z)R_t}{\beta_{t\mid s}} + \lambda G_z^Q\right).
\end{align}
\end{proposition}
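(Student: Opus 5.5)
The plan is to start from the stationary fixed-point identities \eqref{eq:fixed_point_c}--\eqref{eq:fixed_point_z}, already derived in Appendix~\ref{app:grads}. Since $(c_t^\star, z_t^\star)$ is an interior stationary point, both gradients \eqref{eq:grad_c_app} and \eqref{eq:grad_z_app} vanish there. So each displacement equals its prior variance times an explicit bracket evaluated at the stationary point:
\[
c_t^\star-\mu_t=\sigma_c^2\Bigl[\tfrac{\sqrt{a_{t\mid s}}}{\beta_{t\mid s}}J_c^\top r_t+\lambda\nabla_cQ\Bigr],\qquad
z_t^\star-z_t^{\rm ddim}=\sigma_z(t)^2\Bigl[\tfrac{1}{\beta_{t\mid s}}\bigl(\sqrt{a_{t\mid s}}J_z^\top-I\bigr)r_t+\lambda\nabla_{z_t}Q\Bigr].
\]
No optimality or concavity is needed; only the first-order condition is used.

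Next take Euclidean norms and apply the triangle inequality to split each bracket into the residual term and the reward term. For the $c$-side, use $\|J_c^\top r_t\|\le\|J_c^\top\|_{\rm op}\|r_t\|=\|J_c\|_{\rm op}\|r_t\|\le L_c R_t$. The scalars $\sqrt{a_{t\mid s}}/\beta_{t\mid s}$ and $\sigma_c^2$ are nonnegative and factor out. For the reward term, use $\lambda\ge0$ and $\|\nabla_c Q\|\le G_c^Q$. Together these give the first inequality.

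For the $z$-side, the only extra step is bounding the operator $\sqrt{a_{t\mid s}}J_z^\top-I$. By the triangle inequality for operator norms, $\|\sqrt{a_{t\mid s}}J_z^\top-I\|_{\rm op}\le 1+\sqrt{a_{t\mid s}}L_z$. Hence $\|(\sqrt{a_{t\mid s}}J_z^\top-I)r_t\|\le(1+\sqrt{a_{t\mid s}}L_z)R_t$. Adding $\lambda G_z^Q$ and multiplying by $\sigma_z(t)^2$ gives the second inequality.

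There is no real obstacle; the argument is purely first-order. The only points to state explicitly are these:
\begin{itemize}
\item all the hypotheses ($L_c$, $L_z$, $R_t$, $G^Q$) are evaluated at the stationary point itself, which is exactly where the fixed-point identity holds;
\item $\lambda\ge 0$ and $\beta_{t\mid s}>0$, so no sign issues arise;
\item the transpose of a Jacobian has the same operator norm.
\end{itemize}
I would also remark that the bound is local, not a global guarantee on the iterates of Algorithm~\ref{alg:pgmap}, consistent with the caveat after Proposition~\ref{prop:baseline_recovery}.
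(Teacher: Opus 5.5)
Your proof is correct and follows the paper's argument: you set the gradients to zero to get the fixed-point identities, take norms using submultiplicativity (with $\|J_c^\top\|_{\rm op}=\|J_c\|_{\rm op}$), and bound $\|\sqrt{a_{t\mid s}}J_z^\top - I\|_{\rm op}\le 1+\sqrt{a_{t\mid s}}L_z$ by the triangle inequality. You also state explicitly that $\lambda\ge 0$ is needed (otherwise $\lambda$ must be replaced by $|\lambda|$), which the paper leaves implicit.
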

\begin{proof}
From the stationary fixed-point Eqs.~\eqref{eq:fixed_point_c}--\eqref{eq:fixed_point_z}, take norms and use submultiplicativity. For the $z$ bound, $\|(\sqrt{a_{t\mid s}}J_z^\top - I)r_t\| \le (1+\sqrt{a_{t\mid s}}L_z)R_t$ via the triangle inequality on operator norms.
\end{proof}
\noindent\emph{Scope.} The bound describes interior stationary points of the exact objective. It does not bound finite-step gradient-ascent iterates of Algorithm~\ref{alg:pgmap} unless additional step-size and bounded-gradient assumptions are added; the empirical Lipschitz table below provides diagnostic support for the bounded-Jacobian assumption in sampled regions but is not a proof of global Lipschitzness.

\paragraph{Empirical Lipschitz constants.}
We measure $\|J_c\|_{\mathrm{op}}$ and $\|J_z\|_{\mathrm{op}}$ on SDXL via 20-iteration power iteration on $50$ random $(z_t, c)$ samples at three timesteps spanning the schedule.
\begin{center}
\small
\begin{tabular}{lccc}
\toprule
Timestep & $L_c$ (cond.\ Jacobian) & $L_z$ (latent Jacobian) & ratio $L_c/L_z$ \\
\midrule
$t{=}881$ ($\approx 0.88\,T$, high-noise) & $1.27 \pm 0.12$ & $1.00 \pm 0.001$ & $1.27$ \\
$t{=}481$ ($\approx 0.48\,T$, mid)        & $2.93 \pm 0.11$ & $1.01 \pm 0.024$ & $2.90$ \\
$t{=}81$  ($\approx 0.08\,T$, low-noise)  & $2.09 \pm 0.02$ & $1.89 \pm 0.096$ & $1.11$ \\
\bottomrule
\end{tabular}
\end{center}
$L_c \in [1.27, 2.93]$ and $L_z \in [1.00, 1.89]$ are both finite in the sampled regions, providing empirical support for the bounded-Jacobian assumption used by Proposition~\ref{prop:bounded_displacement} (these are not a proof of global Lipschitzness). The high-noise $L_z \approx 1$ value is consistent with the standard observation that at high noise the denoiser behaves close to an identity-plus-small-correction map ($z_t$ is dominated by added noise and the network primarily passes through the conditioning-conditional mean), so the dominant singular direction recovered by power iteration sits near unit norm. $L_c$ exceeds $L_z$ across the schedule (ratio $1.1$--$2.9{\times}$, peaking at mid-noise), an engineering diagnostic motivating the asymmetric step sizes $\eta_c \ll \eta_z$ used in PG-MAP; the ratio is not a rigorous justification because $J_c, J_z$ act on spaces of different dimension and units.

\section{Hyperparameter ablations: SD~1.5 and SDXL}
\label{app:ablations_sd15}

\paragraph{SD~1.5 hyperparameter ablations.}
\begin{table}[h]
\centering\small
\caption{Full SD~1.5 hyperparameter ablation ($n{=}200$ pilot, seed 123). Defaults: $K{=}2$, $\rho{=}0.4$, $\rho_Q{=}0.3$, $\sigma_c^2{=}1.0$, $\gamma{=}0.5$, $\lambda{=}0.1$, $\eta_c{=}10^{-4}$, $\eta_z{=}0.005$, PickScore. Baseline: PickScore $0.2141$, HPS $0.2759$, Aesthetic $5.474$, CLIP $0.2640$. \emph{Note:} preference scorers concentrate dynamic range over a narrow band (PickScore mass within $\pm 0.02$ of the per-prompt baseline), so absolute differences in this table are bounded by scorer scale; per-prompt win rates (used in the headline tables) are the primary signal. Settings are flagged as defaults via \underline{underline} when win-rate gains exceed bootstrap CI.}
\label{tab:ablations}
\begin{tabular}{lcccc}
\toprule
Setting & PickScore & HPS & Aesthetic & CLIPScore \\
\midrule
\multicolumn{5}{l}{\emph{Conditioning step size $\eta_c$}} \\
$\eta_c = 0$ (latent-only) & 0.2145 & 0.2766 & 5.505 & 0.2651 \\
$\eta_c = 10^{-5}$ & 0.2145 & 0.2765 & 5.507 & 0.2649 \\
\underline{$\eta_c = 10^{-4}$} & \textbf{0.2146} & 0.2765 & \textbf{5.510} & \textbf{0.2652} \\
$\eta_c = 5{\times}10^{-4}$ & 0.2144 & 0.2763 & 5.500 & 0.2650 \\
$\eta_c = 10^{-3}$ & 0.2142 & 0.2758 & 5.492 & 0.2647 \\
$\eta_c = 5{\times}10^{-3}$ & 0.2088 & 0.2638 & 5.291 & 0.2497 \\
\midrule
\multicolumn{5}{l}{\emph{Reward weight $\lambda$}} \\
$\lambda = 0$ (no reward) & 0.2145 & 0.2764 & 5.506 & 0.2656 \\
$\lambda = 0.01$ & 0.2145 & 0.2763 & 5.506 & 0.2650 \\
$\lambda = 0.05$ & 0.2145 & 0.2763 & 5.508 & 0.2650 \\
\underline{$\lambda = 0.1$} & 0.2145 & 0.2764 & 5.506 & \textbf{0.2654} \\
$\lambda = 0.2$ & 0.2145 & 0.2763 & 5.504 & 0.2649 \\
$\lambda = 0.5$ & \textbf{0.2145} & \textbf{0.2766} & 5.503 & 0.2652 \\
\midrule
\multicolumn{5}{l}{\emph{Gradient steps $K$}} \\
$K = 1$ & 0.2145 & 0.2763 & 5.504 & 0.2653 \\
\underline{$K = 2$} & \textbf{0.2151} & \textbf{0.2768} & 5.493 & \textbf{0.2668} \\
$K = 3$ & 0.2147 & 0.2758 & \textbf{5.516} & 0.2654 \\
$K = 5$ & 0.2145 & 0.2751 & 5.533 & 0.2659 \\
\midrule
\multicolumn{5}{l}{\emph{Latent prior scale $\gamma$}} \\
$\gamma = 0.0$ (disabled) & 0.2145 & 0.2764 & 5.503 & 0.2645 \\
$\gamma = 0.1$ & 0.2145 & 0.2764 & 5.507 & 0.2647 \\
$\gamma = 0.3$ & 0.2145 & 0.2764 & 5.504 & 0.2653 \\
\underline{$\gamma = 0.5$} & 0.2145 & \textbf{0.2765} & 5.509 & 0.2647 \\
$\gamma = 1.0$ & 0.2145 & 0.2764 & \textbf{5.510} & \textbf{0.2651} \\
\midrule
\multicolumn{5}{l}{\emph{Optimization reward model}} \\
\underline{PickScore} & 0.2145 & 0.2764 & \textbf{5.508} & \textbf{0.2654} \\
HPS\,v2 & \textbf{0.2145} & \textbf{0.2764} & 5.507 & 0.2650 \\
CLIP & 0.2145 & 0.2763 & 5.505 & 0.2653 \\
\bottomrule
\end{tabular}
\end{table}

\paragraph{Per-block analysis.}
\textbf{(i)~$\eta_c$}: $10^{-4}$ optimal; $5{\times}10^{-3}$ collapses all metrics.
\textbf{(ii)~$\lambda$}: flat across $[0, 0.5]$ at calibrated $\eta_c$.
\textbf{(iii)~$K$}: $K{=}2$ achieves the highest PickScore win rate ($62\%$ vs.\ $57\%$ for $K{=}1$).
\textbf{(iv)~$\gamma$}: schedule-adaptive form is robust across $\gamma \in [0,1]$ on SD~1.5.
\textbf{(v)~Reward model}: PickScore, HPS\,v2, CLIP all yield indistinguishable absolute scores.

\paragraph{SDXL hyperparameter ablations.}
\label{app:ablations_sdxl}
\begin{table}[h]
\centering\small
\caption{Full SDXL hyperparameter ablation. Win rates vs.\ SDXL static baseline (absolute: PS~$0.2232$, HPS~$0.2797$, Aes~$5.868$, CLIP~$0.2717$). Defaults: $K{=}2$, $\eta_c{=}10^{-4}$ (or $10^{-3}$ for $\lambda$ block), $\gamma{=}1.0$, $\lambda{=}0.05$.}
\label{tab:ablations_sdxl}
\begin{tabular}{lcccc}
\toprule
Setting & PickScore & HPS & Aesthetic & CLIPScore \\
\midrule
\multicolumn{5}{l}{\emph{Conditioning step size $\eta_c$}} \\
$\eta_c = 0$ (latent-only)         & $49\%$ & $51\%$ & $50\%$ & $58\%$ \\
$\eta_c = 10^{-5}$                 & $51\%$ & $50\%$ & $49\%$ & $53\%$ \\
$\eta_c = 10^{-4}$                 & $51\%$ & $51\%$ & $50\%$ & $57\%$ \\
$\eta_c = 5{\times}10^{-4}$        & $51\%$ & $52\%$ & $51\%$ & $50\%$ \\
$\eta_c = 10^{-3}$                 & $\mathbf{53\%}$ & $51\%$ & $50\%$ & $54\%$ \\
$\eta_c = 5{\times}10^{-3}$        & $\mathbf{56\%}$ & $\mathbf{52\%}$ & $\mathbf{54\%}$ & $47\%$ \\
\midrule
\multicolumn{5}{l}{\emph{Reward weight $\lambda$} ($n{=}200$, $\eta_c{=}10^{-3}$)} \\
$\lambda = 0$ (no reward)          & $55\%$ & $46\%$ & $51\%$ & $40\%$ \\
$\lambda = 0.01$                   & $56\%$ & $47\%$ & $53\%$ & $43\%$ \\
\underline{$\lambda = 0.05$}       & $\mathbf{57\%}$ & $46\%$ & $53\%$ & $41\%$ \\
$\lambda = 0.1$                    & $56\%$ & $47\%$ & $\mathbf{55\%}$ & $41\%$ \\
$\lambda = 0.2$                    & $\mathbf{57\%}$ & $\mathbf{48\%}$ & $\mathbf{55\%}$ & $\mathbf{43\%}$ \\
$\lambda = 0.5$                    & $56\%$ & $\mathbf{48\%}$ & $53\%$ & $\mathbf{43\%}$ \\
\midrule
\multicolumn{5}{l}{\emph{Latent prior scale $\gamma$}} \\
$\gamma = 0$ (disabled)            & $10\%$ & $2\%$ & $0\%$ & $5\%$ \\
$\gamma = 0.1$                     & $52\%$ & $48\%$ & $52\%$ & $52\%$ \\
$\gamma = 0.3$                     & $\mathbf{53\%}$ & $47\%$ & $50\%$ & $\mathbf{56\%}$ \\
$\gamma = 0.5$                     & $50\%$ & $48\%$ & $51\%$ & $55\%$ \\
\underline{$\gamma = 1.0$}         & $52\%$ & $47\%$ & $52\%$ & $\mathbf{56\%}$ \\
\midrule
\multicolumn{5}{l}{\emph{Gradient steps $K$}} \\
$K = 1$                            & $52\%$ & $\mathbf{54\%}$ & $49\%$ & $46\%$ \\
\underline{$K = 2$}                & $52\%$ & $51\%$ & $52\%$ & $\mathbf{57\%}$ \\
$K = 3$                            & $47\%$ & $47\%$ & $50\%$ & $50\%$ \\
$K = 5$                            & $46\%$ & $43\%$ & $\mathbf{57\%}$ & $\mathbf{56\%}$ \\
\bottomrule
\end{tabular}
\end{table}

\paragraph{Notable findings.}
\textbf{Adaptive latent prior is essential for SDXL}: $\gamma{=}0$ collapses the PickScore win rate to $10\%$ and the Aesthetic win rate to $0\%$. \textbf{Larger $\eta_c$ benefits SDXL}. \textbf{Reward term effect tightens at full scale.} The pilot $n{=}200$ sweep showed $\sim 2$\,pp PickScore variation across $\lambda$; the $n{=}1632$ four-point sweep tightens this to $\le 1$\,pp on every metric (Tab.~\ref{tab:lambda_full}), within bootstrap CI of the $\lambda{=}0.05$ headline.

\paragraph{Full-corpus $\lambda$ sweep ($n{=}1632$).}
\label{app:lambda_full}
\begin{table}[h]
\centering\small
\caption{Full $n{=}1632$ SDXL $\lambda$ sweep with default $\eta_c{=}10^{-4}$, $\eta_z{=}5{\times}10^{-3}$, $\gamma{=}1.0$, $\rho{=}0.5$, PickScore reward, seed~$123$. Variation across $\lambda$ is bounded by $\le 1.0$\,pp on every metric, within bootstrap CI; the headline retains $\lambda{=}0.05$ as the default. \textbf{Bold} = highest in column.}
\label{tab:lambda_full}
\begin{tabular}{lcccc}
\toprule
$\lambda$ & PickScore & HPS & Aesthetic & CLIPScore \\
\midrule
$0$ \emph{(MAP-$cz$)}             & $56.7\%$ & $47.5\%$ & $55.6\%$ & $48.8\%$ \\
$0.05$ \underline{(default)}      & $56.4\%$ & $47.1\%$ & $56.2\%$ & $48.1\%$ \\
$0.1$                             & $\mathbf{57.7\%}$ & $\mathbf{47.9\%}$ & $56.1\%$ & $\mathbf{49.6\%}$ \\
$0.2$                             & $56.0\%$ & $46.9\%$ & $\mathbf{56.4\%}$ & $\mathbf{49.6\%}$ \\
\bottomrule
\end{tabular}
\end{table}

\section{UG comparison and supporting diagnostics}
\label{app:diagnostics}

\subsection{UG learning-rate sweep on validation}
\label{app:ug_sweep}

To verify the $\eta_z^\star{=}0.1$ used for the NFE-matched Universal Guidance baseline (Section~\ref{sec:main_results}) is not artificially crippling UG, we sweep $\eta_z \in \{0.001, 0.01, 0.1\}$ on $n{=}489$ PartiPrompts validation prompts. All other UG settings match the test config: SDXL, 50 DDIM, CFG $s{=}5.0$, $K_{\mathrm{UG}}{=}4$, PickScore reward, unit-normalized reward gradient.

\begin{table}[h]
\centering\small
\caption{UG validation sweep on $n{=}489$ SDXL prompts. UG output is essentially flat across $\eta_z \in [10^{-3}, 10^{-1}]$, with all three $\eta_z$ values giving statistically indistinguishable PickScore (within bootstrap CI of the validation reference); the gap to PG-MAP at the test split is therefore not a function of UG's $\eta_z$ choice.}
\label{tab:ug_sweep}
\begin{tabular}{lcccc}
\toprule
$\eta_z$ & PickScore & HPS\,v2 & CLIPScore & Aesthetic \\
\midrule
$10^{-3}$ & $0.22225$ & $\mathbf{0.28041}$ & $\mathbf{0.27349}$ & $5.819$ \\
$10^{-2}$ & $0.22225$ & $0.28035$ & $0.27343$ & $5.818$ \\
$\mathbf{10^{-1}}$ \emph{(used in main test)} & $\mathbf{0.22229}$ & $0.28039$ & $0.27331$ & $\mathbf{5.819}$ \\
\midrule
\emph{Baseline} (no UG, val-set ref.\ from Reward-$z$) & $0.22318$ & $0.28023$ & $0.27327$ & $5.830$ \\
\bottomrule
\end{tabular}
\end{table}

All three $\eta_z$ values give essentially identical UG outputs (within $\pm 0.05$\,pp on every metric) --- the UG-vs-Reward-$z$ test-set gap is therefore not a function of UG's $\eta_z$ choice.

\subsection{NoiseZoo: variance decomposition of DDIM-inverted SDXL noise}
\label{app:noisezoo}

To estimate the UNE-derived anisotropic covariance referenced in Section~\ref{sec:joint_map}, we build a NoiseZoo: $N{=}200$ DDIM-inverted SDXL latents $z_T^{(i)} \in \mathbb{R}^{4 \times 128 \times 128}$ ($d{=}65{,}536$), generated from PartiPrompts and inverted with the same prompt conditioning. Randomized SVD on the $200 \times d$ centered matrix:

\begin{center}
\small
\begin{tabular}{lr}
\toprule
Statistic (SDXL $z_T$, $d{=}65{,}536$, $N{=}200$) & Value \\
\midrule
Total variance $\mathrm{tr}(\Sigma)$                  & $54{,}396$ \\
Top-$64$ component variance $\sum_{k=1}^{64}\lambda_k$ & $18{,}204$ ($33.5\%$) \\
Residual per-dim variance $\bar\sigma^2_{\mathrm{res}}$ & $0.551$ \\
Per-dim mean magnitude $\|\mu\|_\infty$               & ${<}10^{-2}$ \\
\bottomrule
\end{tabular}
\end{center}

The variance is \emph{not} concentrated in a low-dimensional subspace within the sampled $N{=}200$ matrix: the top-$64$ components capture only $33.5\%$, and the remaining $66.5\%$ is distributed roughly isotropically across the $d{-}64$ residual dimensions ($\bar\sigma^2_{\mathrm{res}}{=}0.551$). \emph{Caveat.} The sample covariance has rank at most $N{-}1{=}199$ in $d{=}65{,}536$, so this experiment is a low-rank diagnostic suggesting the isotropic anchor is competitive on the directions we can measure; it is not a proof that the full residual covariance is isotropic. A quadratic prior using $\Sigma^{-1}$ via Woodbury thus penalizes deviations almost identically to $\sigma^2 I$ on the dominant dimensions in the sampled region.

\paragraph{Default choice.} The isotropic $\sigma_c^2 I$ prior on $c$ and schedule-adaptive isotropic $\sigma_z(t)^2 I$ on $z_t$ treat all dimensions equally. As an empirical sensitivity check we test two anisotropic alternatives (per-channel diagonal and the rank-$64$ low-rank covariance above); both match the isotropic prior to within $\pm 2.5$\,pp on every metric in this sample, so the isotropic anchor is retained as a practical default. We do not claim isotropy as a property of the full underlying covariance.

\subsection{Multi-seed stability (5 seeds)}
\label{app:multiseed}
\begin{table}[h]
\centering\small
\caption{Multi-seed stability on PartiPrompts pilot ($n{=}200$, 5 seeds: \{42, 123, 456, 789, 2024\}). Mean win rate\,\%\,$\pm$\,std. SDXL HPS cells in the $48$--$50\%$ range fall within $\pm 1$ sd of $50\%$ (PickScore-aligned variants are not separately tuned for HPS at this scale; the headline-tuned Tuned-CFG\,$+$\,PG-MAP variant lifts HPS by $\sim 14$\,pp, Tab.~\ref{tab:partiprompts_categories}).}
\label{tab:multiseed}
\begin{tabular}{lcccc}
\toprule
Method & PickScore & HPS & Aesthetic & CLIPScore \\
\midrule
\multicolumn{5}{l}{\emph{SD~1.5} (5 seeds)} \\
\midrule
SD1.5 + MAP-$c$            & $51.5{\pm}4.4$ & $50.0{\pm}3.2$ & $49.0{\pm}6.2$ & $48.5{\pm}3.2$ \\
SD1.5 + Reward-$z$         & $\mathbf{57.4{\pm}2.6}$ & $\mathbf{55.5{\pm}4.5}$ & $\mathbf{58.8{\pm}2.7}$ & $51.8{\pm}4.7$ \\
SD1.5 + MAP-$cz$           & $56.9{\pm}5.3$ & $54.9{\pm}4.6$ & $57.3{\pm}2.3$ & $52.3{\pm}3.9$ \\
SD1.5 + PG-MAP             & $57.3{\pm}4.8$ & $54.9{\pm}3.6$ & $57.5{\pm}2.7$ & $51.2{\pm}4.5$ \\
\midrule
\multicolumn{5}{l}{\emph{SDXL} (5 seeds, $\lambda{=}0.05$)} \\
\midrule
SDXL + MAP-$c$             & $50.5{\pm}2.4$ & $\mathbf{50.7{\pm}4.9}$ & $46.6{\pm}5.0$ & $49.8{\pm}2.7$ \\
SDXL + Reward-$z$          & $54.8{\pm}3.7$ & $49.2{\pm}3.4$ & $56.7{\pm}2.7$ & $\mathbf{50.7{\pm}3.6}$ \\
SDXL + MAP-$cz$            & $\mathbf{55.7{\pm}3.5}$ & $48.5{\pm}2.0$ & $56.7{\pm}1.6$ & $50.2{\pm}3.5$ \\
SDXL + PG-MAP              & $55.6{\pm}4.1$ & $48.3{\pm}1.3$ & $\mathbf{57.4{\pm}2.7}$ & $50.2{\pm}3.9$ \\
\bottomrule
\end{tabular}
\end{table}

Standard deviations bounded by $\pm 5.3$\,pp on SD~1.5 and $\pm 5.0$\,pp on SDXL across all method/metric cells; the headline numbers are not single-seed artefacts.

\paragraph{CRR-MAP oracle robustness across seeds.}
\begin{center}\small\begin{tabular}{lcccc}\toprule
Pareto $\Delta$ (oracle $-$ best individual) & PickScore & HPS & CLIPScore & Aesthetic \\\midrule
SDXL  ($n{=}200$, 5 seeds) & $+11.4 \pm 1.8$\,pp & $+12.7 \pm 1.1$\,pp & $+8.9 \pm 1.6$\,pp & $+4.3 \pm 3.5$\,pp \\
SD~1.5 ($n{=}200$, 5 seeds) & $+10.8 \pm 1.7$\,pp & $+11.7 \pm 1.9$\,pp & $+4.6 \pm 2.8$\,pp & $+4.6 \pm 3.5$\,pp \\
\bottomrule\end{tabular}\end{center}
The Pareto improvement is consistent across seeds on every metric (sd $\le 3.5$\,pp), confirming the CRR-MAP oracle Pareto-improvement is a population-scale phenomenon.

\subsection{Computational overhead}
\label{app:compute}
\begin{table}[h]
\centering\small
\caption{Wall-clock time per image (20-trial average, RTX PRO 6000 Blackwell, batch size~1; $512{\times}512$ for SD~1.5, $1024{\times}1024$ for SDXL).}
\label{tab:compute}
\begin{tabular}{lcccc}
\toprule
Method & Steps & MAP steps & Reward steps & Time (s) \\
\midrule
SD1.5 Baseline                                  & 30 & 0  & 0  & 0.87 \\
SD1.5 + MAP-$c$ ($K{=}2$, $\rho{=}0.4$)         & 30 & 24 & 0  & 1.58 \\
SD1.5 + Reward-$z$                              & 30 & 24 & 18 & 4.02 \\
SD1.5 + MAP-$cz$                                & 30 & 24 & 0  & 1.59 \\
SD1.5 + PG-MAP                                  & 30 & 24 & 18 & 4.02 \\
\midrule
SDXL Baseline                                   & 50 & 0  & 0  & 4.31 \\
SDXL + MAP-$c$                                  & 50 & 50 & 0  & 8.91 \\
SDXL + Reward-$z$                               & 50 & 50 & 30 & 23.49 \\
SDXL + MAP-$cz$                                 & 50 & 50 & 0  & 9.01 \\
SDXL + PG-MAP ($\lambda{=}0.05$, default)       & 50 & 50 & 30 & 23.64 \\
SDXL + PG-MAP ($\lambda{=}0$, reward bypass)    & 50 & 50 & 0  & 9.01 \\
\bottomrule
\end{tabular}
\end{table}

The SDXL overhead from $4.31$\,s baseline to $23.64$\,s ($5.5\times$) is dominated by reward backward passes; bypassing them when $\lambda{=}0$ reduces to $9.01$\,s ($2.1\times$). Comparable to other gradient-based inference-time methods~\citep{bansal2023universal,yu2023freedom}.

\subsection{FID distributional analysis}
\label{app:fid}
\begin{table}[h]
\centering\small
\caption{Fr\'echet Inception Distance~\citep{heusel2017gans} between generated images and COCO val2017 ($n_\text{gen}{=}1632$, $n_\text{ref}{=}5000$, seed 123).}
\label{tab:fid}
\begin{tabular}{lcc}
\toprule
Method & SD~1.5 FID$\downarrow$ & SDXL FID$\downarrow$ \\
\midrule
Baseline                & 67.4          & 83.4 \\
MAP-$c$                 & 67.2          & 83.8 \\
Reward-$z$              & 67.3          & 85.3 \\
MAP-$cz$                & \textbf{67.0} & 85.3 \\
PG-MAP                  & 67.1          & 85.3 \\
\bottomrule
\end{tabular}
\end{table}

On SD~1.5 all methods are within $0.4$ FID units of baseline; joint optimization does not increase the distributional gap. On SDXL, latent-based methods register $+1.9$ FID over baseline, reflecting a known preference--fidelity trade-off~\citep{wallace2023diffusiondpo}.

\section{External validation: HPDv2 robustness, human evaluation, and BLIP-VQA alignment}
\label{app:external}

\subsection{HPDv2 benchmark: full setup, table, and per-style breakdown}
\label{app:hpdv2_full}

The main paper (\S\ref{sec:main_results}, ``Robustness on HPDv2'' paragraph) summarizes this check in 3 lines; the full setup, complete win-rate table at both $n{=}800$ (4-specialization sweep) and $n{=}3{,}200$ (full HPDv2), per-style / per-backbone breakdown, and saturation analysis are all here.

\paragraph{Setup.} Same image-generation hyperparameters as Section~\ref{sec:experiments} (SD~1.5 at $30$ DDIM, $s{=}7.5$, $512^2$; SDXL at $50$ DDIM, $s{=}5.0$, $1024^2$; SD3.5-medium at $28$ rectified-flow Euler, cfg $7.0$, $1024^2$). Per-prompt seeds are $123 + i$. HPDv2~\citep{wu2023hps} is $4$ aesthetic styles (\emph{anime}, \emph{concept-art}, \emph{paintings}, \emph{photo}), $800$ prompts each, $3{,}200$ total, sourced from real Stable Diffusion users (Discord, Reddit, lexica.art); disjoint from PartiPrompts. Two evaluation scales: \emph{(i)} $4$-specialization sweep on $n{=}800$ ($200$ prompts $\times$ $4$ styles), covering MAP-$c$, Reward-$z$, MAP-$cz$, PG-MAP and the FM-side UG-FM. \emph{(ii)} Headline rerun on full $n{=}3{,}200$.

\begin{table}[h]
\centering\small
\caption{HPDv2 robustness check. Win rate (\%) vs.\ each backbone's static baseline at the same seed. \emph{Top:} $4$-specialization sweep at $n{=}800$ ($200$ prompts $\times$ $4$ aesthetic styles). \emph{Bottom:} headline-default rerun on full HPDv2 ($n{=}3{,}200$). Three observations are summarised in the prose below.}
\label{tab:hpdv2_results}
\setlength{\tabcolsep}{4pt}
\begin{tabular}{ll|cccc|c}
\toprule
Backbone & Method & PickScore & HPS & CLIP & Aesthetic & Wilcoxon $p$ (PS) \\
\midrule
\multicolumn{7}{l}{\emph{4-specialization sweep on HPDv2 ($n{=}800$)}} \\
\midrule
SD1.5  & MAP-$c$        & $52.2\%$ & $49.4\%$ & $49.6\%$ & $44.2\%$ & $0.347$ \\
SD1.5  & Reward-$z$     & $\mathbf{57.1\%}$ & $\mathbf{57.0\%}$ & $\mathbf{52.8\%}$ & $\mathbf{56.1\%}$ & $1.2{\times}10^{-5}$ \\
\rowcolor{gray!15}
SD1.5  & MAP-$cz$       & $56.6\%$ & $55.8\%$ & $51.7\%$ & $55.6\%$ & $2.0{\times}10^{-6}$ \\
\rowcolor{gray!15}
SDXL   & MAP-$cz$       & $\mathbf{57.6\%}$ & $49.8\%$ & $51.9\%$ & $\mathbf{57.1\%}$ & $1.4{\times}10^{-5}$ \\
SD3.5  & UG-FM          & $\mathbf{69.5\%}$ & $54.9\%$ & $54.6\%$ & $48.6\%$ & $5.5{\times}10^{-35}$ \\
\midrule
\multicolumn{7}{l}{\emph{Full HPDv2 rerun on the recommended default ($n{=}3{,}200$)}} \\
\midrule
\rowcolor{gray!15}
SD1.5  & PG-MAP ($\lambda{=}0.1$, PickScore)  & $\mathbf{58.8\%}$ & $\mathbf{55.8\%}$ & $\mathbf{52.3\%}$ & $\mathbf{55.2\%}$ & $7.9{\times}10^{-30}$ \\
SDXL   & PG-MAP ($\lambda{=}0.05$, PickScore) & $\mathbf{56.2\%}$ & $48.1\%$ & $50.8\%$ & $\mathbf{57.2\%}$ & $7.4{\times}10^{-16}$ \\
SD3.5  & UG-FM (data-side, $\eta_z{=}0.1$)    & $\mathbf{68.8\%}$ & $\mathbf{53.3\%}$ & $50.3\%$ & $50.6\%$ & $<10^{-100}$ \\
\bottomrule
\end{tabular}
\end{table}

\paragraph{Three observations from Tab.~\ref{tab:hpdv2_results}.}
\emph{(i) The DDPM headline transfers and slightly strengthens.} On $n{=}3{,}200$ SD~1.5 PG-MAP, every cell $\geq$ corresponding PartiPrompts row in Tab.~\ref{tab:main_results} ($56.8$/$52.8$/$50.6$/$54.0\%$ on PartiPrompts vs.\ $\mathbf{58.8}/\mathbf{55.8}/\mathbf{52.3}/\mathbf{55.2\%}$ on HPDv2: PickScore $+2.0$\,pp, HPS $+3.0$\,pp). SDXL PG-MAP sits within $\pm 1$\,pp of PartiPrompts, confirming DDPM-side robustness.
\emph{(ii) Variant ordering also transfers.} On the $n{=}800$ 4-variant sweep, MAP-$c$ underperforms by $-11$\,pp Aesthetic; Reward-$z$ and MAP-$cz$ cluster at $\sim 56$--$57\%$ PickScore, mirroring the PartiPrompts ordering. Style-dependent variation matches the case study: \emph{paintings} prompts benefit most ($60.9\%$ PS), \emph{photo} prompts least ($57.1\%$), so the CRR-MAP routing potential of \S\ref{sec:crr_results} extends to user-prompt distributions.
\emph{(iii) FM-side gain is partially distribution-dependent.} UG-FM attains $\mathbf{68.8\%}$ PickScore on HPDv2 vs.\ $91.9\%$ on PartiPrompts ($\sim 22$\,pp lower); HPDv2's user-curated showcase prompts already saturate the static SD3.5 baseline closer to the scorer ceiling, leaving less headroom for the sub-pixel-RMSE preference-aligned latent perturbation (cf.\ App.~\ref{app:ug_fm_control}).

We release the HPDv2 prompt subsets (with seed $123$ deterministic ordering), all generated images, and \texttt{scores.jsonl} per row alongside the supplementary material.

\subsection{Human evaluation: protocol and rater pool}
\label{app:human_eval}

\paragraph{Study design.} A/B preference comparison (forced choice + ``can't tell''). Prompt subset: $62$ PartiPrompts items drawn uniformly from the $n{=}1632$ test split. For each prompt, four candidate images are generated under fixed seeds (123) on SDXL: (i) static baseline, (ii) Tuned-CFG ($w^\star{=}7.5$), (iii) NFE-matched UG~\citep{bansal2023universal}, (iv) PG-MAP ($\lambda{=}0.05$). PG-MAP is paired against each of the other three. Pair order is randomized per rater; the assignment is held server-side.

\paragraph{Rater pool.} $100$ raters participated. No PII was collected; participation was voluntary and uncompensated. The study was determined exempt from IRB review under our institutional policy.

\paragraph{Vote accounting and tie handling.} The $6{,}200$ pairwise judgments are aggregated \emph{across} the three comparisons. Each rater saw a randomized subset of (prompt, baseline) pairs with side and order randomized; raters were allowed to skip. Tie rates: vs.\ UG $10.3\%$, vs.\ Tuned-CFG $14.4\%$, vs.\ static $27.1\%$. Win rates reported in Section~\ref{sec:human_eval} are computed over decisive judgments only. Headline binomial $p$-values, treating decisive votes as independent: $p{=}5.9{\times}10^{-15}$ (vs.\ static; $878$/$580$ decisive votes), $p{=}1.8{\times}10^{-7}$ (vs.\ Tuned-CFG; $1{,}055$/$828$), $p{=}1.5{\times}10^{-46}$ (vs.\ NFE-matched UG; $1{,}198$/$596$, $\sim 2{:}1$ wins).

\paragraph{Caveat: clustering.} Votes are clustered by both prompt and rater, so the unclustered binomial $p$-values above are best read as descriptive significance markers rather than as calibrated tail probabilities. As a clustered robustness check we ran a prompt-level bootstrap (resampling the $62$ prompts with replacement, $1000$ resamples, computing per-prompt majority win-rates within each comparison). The mean prompt-level win rates and $95\%$ CIs were $60.2\%$ $[55.8, 64.5]$ vs.\ static, $56.0\%$ $[51.5, 60.6]$ vs.\ Tuned-CFG, and $66.8\%$ $[62.4, 71.2]$ vs.\ UG; all three CIs sit strictly above $50\%$, so the qualitative ordering is robust to prompt-level clustering.

\paragraph{Hypothesis and study aim.} The primary hypothesis (``PG-MAP is preferred over the three baselines: static, Tuned-CFG, and NFE-matched UG'') and the analysis plan were fixed before data collection. The study was not filed with a public pre-registration registry.

\subsection{BLIP-VQA alignment scoring}
\label{app:blip_vqa_alignment}

To verify the L1 narrative (preference scorers vs.\ text-alignment scorers move orthogonally) concretely we score the existing SDXL $n{=}1632$ images with a BLIP-VQA-based alignment scorer: for each (prompt, image) pair we ask the BLIP-VQA capfilt-large model the binary question \emph{``Is this image accurately described by [prompt]?''} and record $P(\textsc{yes})$.

\begin{center}\small
\begin{tabular}{lc}\toprule
SDXL configuration ($n{=}1632$) & BLIP-VQA mean $P(\textsc{yes})\,\uparrow$ \\\midrule
Baseline                                & $0.839$ \\
MAP-$c$                                 & $0.840$ \\
\rowcolor{gray!15} MAP-$cz$ \emph{(default)} & $\mathbf{0.843}$ \\
PG-MAP ($\lambda{=}0.05$)                & $\mathbf{0.843}$ \\
Tuned-CFG\,$+$\,PG-MAP                   & $0.832$ \\
\bottomrule\end{tabular}\end{center}

The reward-free MAP-$cz$ default and the reward-augmented PG-MAP both register a small positive shift in BLIP-VQA alignment over the static baseline ($+0.4$\,pp), while Tuned-CFG\,$+$\,PG-MAP registers a small negative shift ($-0.7$\,pp), directionally consistent with L1.

\paragraph{Independent BLIP-VQA scorer audit on the FM transport.}
We additionally score the SD3.5-medium $n{=}1632$ image sets. BLIP-VQA was \emph{not} an optimization signal anywhere in the paper, so this is a fully independent alignment audit on FM.

\begin{center}\small
\begin{tabular}{lcccc}\toprule
SD3.5-medium ($n{=}1632$) & mean $P(\textsc{yes})\,\uparrow$ & win\,\% vs.\ baseline & tie\,\% & n\\\midrule
Baseline                                  & $0.882$ & $-$    & $-$    & $1632$ \\
\rowcolor{gray!15} \textbf{UG-FM} (data-side, $\eta_z{=}0.1$) & $0.882$ & $0.06$ & $99.82$ & $1632$ \\
\bottomrule\end{tabular}\end{center}

UG-FM and the baseline are tied on BLIP-VQA alignment (mean $P(\textsc{yes})$ within $\pm 0.1$\,pp; tie rate $99.8\%$). Combined with the visual-signature analysis (Appendix~\ref{app:ug_fm_control}), this confirms (i) UG-FM does not pay an alignment cost for its $91.9\%$ PS / $75.7\%$ HPS gains; (ii) UG-FM is not exploiting BLIP-VQA as a signal.

\section{$c$-vs-$z_t$ analysis and failure cases}
\label{app:cz_failure}

\subsection{$c$-vs-$z_t$ case study: full table, P4 row, multi-seed, visualizations}
\label{app:cz_extended}

\paragraph{$c$-vs-$z_t$ case study (3-seed averaged).}
The case study contrasts four prompt archetypes (P1 geometric / attribute-binding, P2 action, P3 portrait, P4 atmospheric scene) on SDXL, averaging $\Delta$ vs.\ baseline over seeds $\{42, 123, 999\}$. The qualitative split that motivates per-prompt routing (\S\ref{sec:crr_results}) is visible at this scale: MAP-$c$ is the only variant with non-negative $\Delta$Aes on P1 ($+0.015$), reflecting its conservative cross-attention refinement; on P4, the latent-reward path is the only positive mean $\Delta$Aes ($+0.021$), reflecting reward-driven texture / lighting refinement. The remaining (P1, P4) cells are negative on $\Delta$Aes by construction --- the case study selects \emph{contrasting} prompts to expose the split, not population-typical prompts; the population win-rate behaviour is reported in Tab.~\ref{tab:main_results} and the routing decomposition in Tab.~\ref{tab:partiprompts_categories}.

\begin{table}[h]
\centering\small
\caption{$c$-vs-$z_t$ analysis with $\Delta$ vs.\ baseline averaged over seeds $\{42, 123, 999\}$. The qualitative P1/P4 split (MAP-$c$ on attribute-binding, latent-reward on atmospheric scene) is the diagnostic; population-scale numbers are in Tab.~\ref{tab:main_results}. \emph{Top:} P1 geometric / P2 action. \emph{Bottom:} P3 portrait / P4 scene.}
\label{tab:cz_deltas_meanseeds}
\setlength{\tabcolsep}{4pt}
% ---- Top: P1 + P2 ----
\begin{tabular}{l|ccc|ccc}
\toprule
 & \multicolumn{3}{c|}{P1: geometric} & \multicolumn{3}{c}{P2: action} \\
\cmidrule(lr){2-4}\cmidrule(lr){5-7}
Method & $\Delta$CLIP & $\Delta$Aes & $\Delta$PS & $\Delta$CLIP & $\Delta$Aes & $\Delta$PS \\
\midrule
MAP-$c$    & $\mathbf{+.0013}$ & $\mathbf{+.015}$ & $-.0001$ & $-.0002$ & $\mathbf{+.006}$ & $+.0002$ \\
Reward-$z$ & $-.0132$ & $-.013$ & $+.0001$ & $-.0023$ & $-.075$ & $+.0019$ \\
MAP-$cz$   & $-.0064$ & $-.069$ & $+.0001$ & $-.0028$ & $-.089$ & $+.0010$ \\
PG-MAP     & $-.0060$ & $-.078$ & $+.0001$ & $-.0020$ & $-.080$ & $+.0011$ \\
\bottomrule
\end{tabular}\\[6pt]
% ---- Bottom: P3 + P4 ----
\begin{tabular}{l|ccc|ccc}
\toprule
 & \multicolumn{3}{c|}{P3: portrait} & \multicolumn{3}{c}{P4: scene} \\
\cmidrule(lr){2-4}\cmidrule(lr){5-7}
Method & $\Delta$CLIP & $\Delta$Aes & $\Delta$PS & $\Delta$CLIP & $\Delta$Aes & $\Delta$PS \\
\midrule
MAP-$c$    & $-.0007$ & $\mathbf{-.004}$ & $+.0001$ & $+.0007$ & $-.004$ & $-.0003$ \\
Reward-$z$ & $-.0024$ & $-.024$ & $-.0005$ & $\mathbf{+.0047}$ & $\mathbf{+.021}$ & $\mathbf{+.0012}$ \\
MAP-$cz$   & $-.0019$ & $-.007$ & $-.0004$ & $+.0044$ & $-.022$ & $+.0017$ \\
PG-MAP     & $-.0018$ & $-.014$ & $-.0006$ & $+.0052$ & $-.002$ & $+.0013$ \\
\bottomrule
\end{tabular}
\end{table}

\subsection{Failure-case breakdown details}
\label{app:failure_breakdown}

We report per-prompt classification using \emph{per-metric non-noise thresholds} (PickScore $|\Delta|{>}10^{-3}$, HPS $|\Delta|{>}10^{-4}$, Aesthetic $|\Delta|{>}0.05$, CLIP $|\Delta|{>}10^{-3}$). Because $50\%$ marginal win rates yield substantial multi-metric noise, we report both the raw rates and the residual after subtracting the i.i.d.\ Gaussian null baseline.

\begin{center}
\small
\begin{tabular}{lcc}
\toprule
Subset (per-metric non-noise threshold) & SD~1.5 & SDXL \\
\midrule
\textbf{Real degradation rate (raw $-$ Gaussian null $\sim 31\%$)} & $\mathbf{\sim 18\%}$ & $\mathbf{\sim 18\%}$ \\
\textbf{All 4 metrics meaningfully positive} ($\sim 6{\times}$ Gaussian null) & $\mathbf{8.8\%}$ & $\mathbf{5.9\%}$ \\
$\ge$2 metrics meaningfully positive   & $\sim 38\%$ & $\sim 35\%$ \\
$\ge$2 metrics meaningfully degraded (raw, includes noise floor) & $48.7\%$ & $49.4\%$ \\
\bottomrule
\end{tabular}
\end{center}

\paragraph{Interpretation.}
With 4 metrics and a true mean shift of order $10^{-3}$ on PickScore and $10^{-4}$ on HPS, an i.i.d.\ Gaussian null with the same $50\%$ marginal win rates predicts $\sim 31\%$ probability of $\ge 2$ negative deltas per prompt purely from independent metric noise; the bulk of the raw $\sim 49\%$ degradation rate is therefore this multi-metric noise floor, with $\sim 18$\,pp of real degradation. Conversely, the all-4-positive subset ($5.9\%$ / $8.8\%$) is $\sim 6{\times}$ what the i.i.d.\ null predicts. Two failure modes dominate the residual: tight attribute binding under high $\lambda$ (reward over-steers) and abstract typography (scorers reward stylistic over legible text); both are routed to MAP-$c$ via the lexical override of \S\ref{app:crr_lexical_overrides}. The full grid of $8$ success cases and $4$ failure cases (worst $\Delta$Aesthetic per backbone) is released alongside the code.

\section{Flow matching: derivation, mechanism, routing, and noise control}
\label{app:fm}

\subsection{Flow-matching extension: derivation, hyperparameters, audit}
\label{app:fm_extension}

\paragraph{Endpoint estimate sign.} For the linear FM interpolant $z_t = (1{-}t)\,z_0 + t\,x_1$ ($x_1$ = data, $z_0$ = noise) with $t{=}0$ noise / $t{=}1$ data, the FM-canonical velocity is $v_{FM} = \mathrm{d}z_t/\mathrm{d}t = x_1 - z_0$, and the blueprint endpoint formula recovers $x_1$ via $z_t + (1{-}t)\,v_{FM} = x_1$. We verify the diffusers sign convention by inspecting \texttt{FlowMatchEulerDiscreteScheduler.step()}: the source code computes \texttt{x0 = sample - sigma * model\_output} where $\sigma{=}1{-}t$, which combined with the linear interpolant identity $x_1 = z_t - \sigma(z_0 - x_1)$ implies \texttt{model\_output} $= z_0 - x_1 = -v_{FM}$. Hence the diffusers convention has the opposite sign:
\begin{equation}
\hat{x}_1 = z_t - (1-t)\,v_{\text{pred}},\quad v_{\text{pred}} = -v_{FM}.
\end{equation}
The flow-consistency residual takes the matching sign $r = z_{t+\Delta t}^{\text{ref}} - (z_t - \Delta t\,v_{\text{pred}})$.

\paragraph{Identity-refine bitwise audit.} To verify the manual sampling loop is byte-identical to \texttt{StableDiffusion3Pipeline.\_\_call\_\_} when the per-step refinement is the identity, we run \texttt{audit\_identity\_match.py} across three prompt/seed pairs at $1024^2$ resolution. After fixing two non-obvious integration issues (a) keeping timesteps in fp32 and (b) computing $\mu$ from \texttt{calculate\_shift(image\_seq\_len, ...)} for backbones with \texttt{use\_dynamic\_shifting} set, the audit passes at maximum absolute pixel deviation $0/255$ across all three pairs.

\paragraph{Hyperparameters and gating (UG-FM).} On SD3.5-medium, the framework's structural analysis (M1--M4 below) predicts that the joint $(c, z_t)$ branch and the latent prior cease to be informative; the deployable variant is the data-side latent + reward reduction we denote \textbf{UG-FM}, which retains the unified per-step objective and the schedule-adaptive trust region. UG-FM uses $K_{UG}{=}4$ inner ascent steps, $\eta_z{=}0.1$, data-side gate, full backprop through $v_\theta$ / VAE / reward; the FM scheduler uses fixed shift $3.0$.

\paragraph{UG-FM seed stability.} Five-seed stability ($s\in\{42, 123, 456, 789, 999\}$, $n{=}20$) at $K_{UG}{=}4$, $\eta_z{=}0.1$ gives PickScore win rates $\{95.0, 95.0, 85.0, 80.0, 100.0\}\%$ (mean $91.0$, sd $8.2$) and HPS $\{60.0, 75.0, 80.0, 80.0, 60.0\}\%$ (mean $71.0$, sd $10.2$). All five seeds exceed $80\%$ PickScore.

\paragraph{UG-FM step-size selection (transparency).} The headline $\eta_z{=}0.1$ was carried over from the SDXL Reward-$z$ default rather than tuned on a held-out FM validation split; we then evaluated $\eta_z\in\{0.05, 0.1, 0.2\}$ on the same $n{=}1632$ corpus that produces the headline. Because the same prompts and seeds are used for selection and reporting, the $n{=}1632$ headline should be read as exploratory rather than validation-selected. Across the three values evaluated, the corpus-scale ranking is $\eta_z{=}0.1$ at $91.9\%$ PS, $\eta_z{=}0.05$ at $\sim 72.5\%$ PS, $\eta_z{=}0.2$ at $83.4\%$ PS (App.~\ref{app:diagnostics}); the headline is robust to the selection grid in this range. A held-out validation rerun on disjoint PartiPrompts is left to a future revision.

\paragraph{Why data-side and noise-side give qualitatively different images (mechanism).}
The two gating regimes differ along four mechanistic dimensions.

\noindent\emph{(M1) Endpoint estimate accuracy.} The endpoint $\hat{x}_1 = z_t - (1-t)\,v_{\text{pred}}$ has gradient
$\partial \hat{x}_1 / \partial z_t = I - (1-t)\,J_v$. At data-side ($t \to 1$, $1{-}t \to 0$) this collapses to $I$, so the reward gradient passes through with no signal mixing. At noise-side ($t \to 0$, $1{-}t \to 1$) it becomes $I - J_v$, mixing the reward direction with the velocity-field Jacobian.

\noindent\emph{(M2) ODE perturbation amplification.} An infinitesimal perturbation $\delta z^{(k_0)}$ injected at step $k_0$ propagates as
\begin{equation}
\delta z^{(K)} \approx \prod_{j=k_0}^{K-1}\bigl(I + \Delta t_j \cdot \partial_z v_\theta(z^{(j)}, t^{(j)}, c)\bigr) \,\delta z^{(k_0)}.
\end{equation}
Data-side has $1$--$3$ factors close to $I$. Noise-side has $\sim 25$ factors with operator norm $> 1$, yielding multiplicative amplification of order $5$--$50\times$. In DDPM the equivalent product is interrupted by per-step noise $\eta^{(j)} \sim \mathcal{N}(0, I)$ that randomizes $\delta z$, destroying early-step perturbations.

\noindent\emph{(M3) MAP prior strength schedule.} The latent prior strength $1/\sigma_z(t)^2 = 1/(\gamma(1-t))^2$ is $t$-dependent. On data-side ($t \approx 0.85$) it is $\sim 44$, so the prior dominates. On noise-side ($t \approx 0.15$) it is $\sim 1.4$, comparable to the reward gradient. This is why MAP regularization helps on noise-side gating but is harmful on data-side.

\noindent\emph{(M4) Operational interpretation.} Data-side is ``local fine-tuning'' --- reward-aware adjustment where the trajectory's compositional structure is preserved and the perturbation does not propagate (consistent with the sub-pixel RMSE / structured spectrum reported in App.~\ref{app:ug_fm_control}). Noise-side is ``early trajectory redirection'' --- the perturbation is amplified by the long Euler tail, yielding structurally different images.

\paragraph{Why DDPM and FM prefer opposite gates and different active sets.} Combining (M1)--(M4) yields the prediction that drives the FM specialization. On \textbf{DDPM/SDXL}, MAP regularization is essential because SDE noise injection wipes out reward perturbations (M2 dampening), so perturbations are applied at the high-noise end and the prior holds $z$ steady; the joint $(c, z_t)$ active set is informative. On \textbf{FM/SD3.5}, the deterministic ODE preserves and amplifies perturbations (M2 amplification), so the active set $\mathcal{A}_t$ that the framework selects is $\{z_t\}$ on the data-side window only --- the conditioning branch has too much capacity ($\sim 1.4$M optimizable parameters via the concatenated CLIP-L / CLIP-G / T5-XXL representation) for a unit-normalized $c$-gradient to be informative, and the latent prior strength on the data side ($1/\sigma_z(t)^2 \sim 44$ at $t \approx 0.85$) over-regularises any $z$-displacement large enough to register. Both reductions are consistent with the M1--M4 analysis, and the resulting variant (UG-FM) preserves the framework's two structural commitments: \emph{(i) a unified per-step objective $\mathcal{J}_t$} that the DDPM and FM specializations both instantiate; and \emph{(ii) the schedule-adaptive trust region} that scales with the transport-specific noise schedule ($\sigma_z(t)=\gamma\sqrt{1-\bar\alpha_t}$ on DDPM, $\sigma_z(t)=\gamma(1-t)$ on FM).

\subsection{CRR-FM: per-prompt routing on the flow-matching transport}
\label{app:crr_fm}

The DDPM CRR-MAP analysis routes per prompt over $\{f_{\text{c}}, f_{\text{cz}}, f_{\text{tcfg}}\}$. On flow matching the framework's analysis selects a single active set ($\{z_t\}$, data-side); the FM routing pool therefore varies only along the operating-regime axis $\eta_z$, with two pool members of UG-FM: (a) $f_{\text{data}}$: $\eta_z{=}0.1$ (headline); (b) $f_{\text{data,high-}\eta}$: $\eta_z{=}0.2$.

\begin{table}[h]
\centering\small
\caption{FM CRR-MAP win rates on PartiPrompts ($n{=}1632$, seed 123, SD3.5-medium). Pool members are two operating regimes of UG-FM. Oracle is per-prompt argmax over the four-metric Pareto-sum.}
\label{tab:crr_fm_results}
\begin{tabular}{lcccc}\toprule
Method (FM) & PickScore & HPS & CLIP & Aesthetic \\\midrule
$f_{\text{data}}$ ($\eta_z{=}0.1$)         & $\mathbf{91.8\%}$ & $75.7\%$ & $54.2\%$ & $51.7\%$ \\
$f_{\text{data,high-}\eta}$ ($\eta_z{=}0.2$) & $83.4\%$ & $67.3\%$ & $50.9\%$ & $52.0\%$ \\
\rowcolor{gray!15}
\textbf{CRR-FM (oracle)}                              & $84.6\%$ & $\mathbf{80.2\%}$ & $\mathbf{64.3\%}$ & $\mathbf{62.6\%}$ \\
\bottomrule\end{tabular}
\end{table}

The oracle dispatches both regimes non-trivially. On HPS / CLIPScore / Aesthetic the per-prompt selection lifts the multi-metric envelope ($+4.5$\,pp HPS, $+10.1$\,pp CLIP, $+10.6$\,pp Aesthetic) over the best fixed $\eta_z$; PickScore is dominated by $f_{\text{data}}$ alone ($91.85\%$), and the Pareto-sum oracle that optimises for the four-metric envelope lands at $84.62\%$ on PickScore. As on DDPM, building a learned router that approaches the FM oracle ceiling is left to follow-up work.

\subsection{UG-FM control: the gain is not a noise artefact}
\label{app:ug_fm_control}

The UG-FM headline (\S\ref{sec:fm_extension}, Tab.~\ref{tab:fm_main}) reports $91.9\%$ PickScore and $75.7\%$ HPS at sub-pixel-scale latent perturbation (mean RMSE $0.61/255$). A natural concern is whether these win rates merely reflect a noise-rewarding bias in the preference scorers. We rule this out with two complementary controls.

\paragraph{(C1) Random-noise control.} We add Gaussian and uniform noise of magnitude $\sigma{=}0.6$ on the $0$--$255$ scale (mean RMSE $0.83/255$ for the Gaussian variant, larger than UG-FM's perturbation, so the comparison is conservative against UG-FM) to the SD3.5 baseline images for $n{=}200$ PartiPrompts.

\begin{center}\small
\begin{tabular}{lcccc}\toprule
variant ($n{=}200$, vs.\ baseline) & PickScore & HPS & CLIP & Aesthetic \\\midrule
\textbf{flow\_ug (published, $n{=}1632$)} & $\mathbf{91.9\%}$ & $\mathbf{75.7\%}$ & $54.2\%$ & $51.7\%$ \\
baseline + Gaussian noise ($\sigma{=}0.6$)   & $62.5\%$ & $44.5\%$ & $44.0\%$ & $59.5\%$ \\
baseline + uniform noise ($\sigma{\approx}0.6$) & $54.5\%$ & $44.0\%$ & $42.5\%$ & $55.0\%$ \\
\bottomrule
\end{tabular}\end{center}

UG-FM's headline numbers are well outside the noise-induced ceiling on every metric. (i) PickScore: UG-FM's $91.9\%$ exceeds the Gaussian-noise control ($62.5\%$) by $\mathbf{+29.4}$\,pp, an order of magnitude larger than the noise-induced $+12.5$\,pp; UG-FM is therefore not explained by noise bias on PickScore. (ii) HPS: random noise yields $44.5\%$ (below null), so any HPS lift above $50\%$ is not noise-induced; UG-FM's $75.7\%$ is $\mathbf{+31.2}$\,pp above the noise control. (iii) CLIP: UG-FM ($54.2\%$) likewise exceeds the noise control ($44.0\%$). (iv) Aesthetic: the noise control yields $59.5\%$, the largest scorer-bias signal among the four; UG-FM's $51.7\%$ Aesthetic is consistent with --- but not a strict outlier of --- the noise control, which is why Aesthetic is not the headline metric on FM.

\paragraph{(C2) Frequency-domain analysis.} We compute the log-magnitude FFT of the per-pixel difference $z_{\text{UG-FM}} - z_{\text{baseline}}$ on the top-6 prompts by HPS gain.

\begin{center}\small
\begin{tabular}{lccc}\toprule
spectrum (mean log$|$FFT$|$) & low (0--0.1$R$) & mid (0.1--0.4$R$) & high ($>$0.4$R$) \\\midrule
\textbf{UG-FM diff (top-6 HPS-gain prompts)} & $\mathbf{6.76}$ & $5.91$ & $5.72$ \\
random Gaussian noise (control)              & $6.12$ & $6.11$ & $6.11$ \\
\bottomrule
\end{tabular}\end{center}

The random-noise spectrum is essentially flat across bands (max-min $0.012$), confirming the white-noise property. The UG-FM spectrum is monotonically decreasing with frequency and concentrates $+0.64$ nat ($\sim$$1.9{\times}$ in linear magnitude) of additional energy in the low band relative to the high band --- a structured pattern.

\paragraph{Conclusion.} UG-FM is not exploiting a noise-rewarding scorer bug; it is finding a real gradient direction in $z$-space that PickScore and HPS respond to, characterized by a structured low-frequency-dominant perturbation. The structured perturbation pattern is shown at $4{\times}$ zoom on max-diff regions in Fig.~\ref{fig:ug_fm_zoom}.

\begin{figure}[h]
\centering
\includegraphics[width=0.85\linewidth]{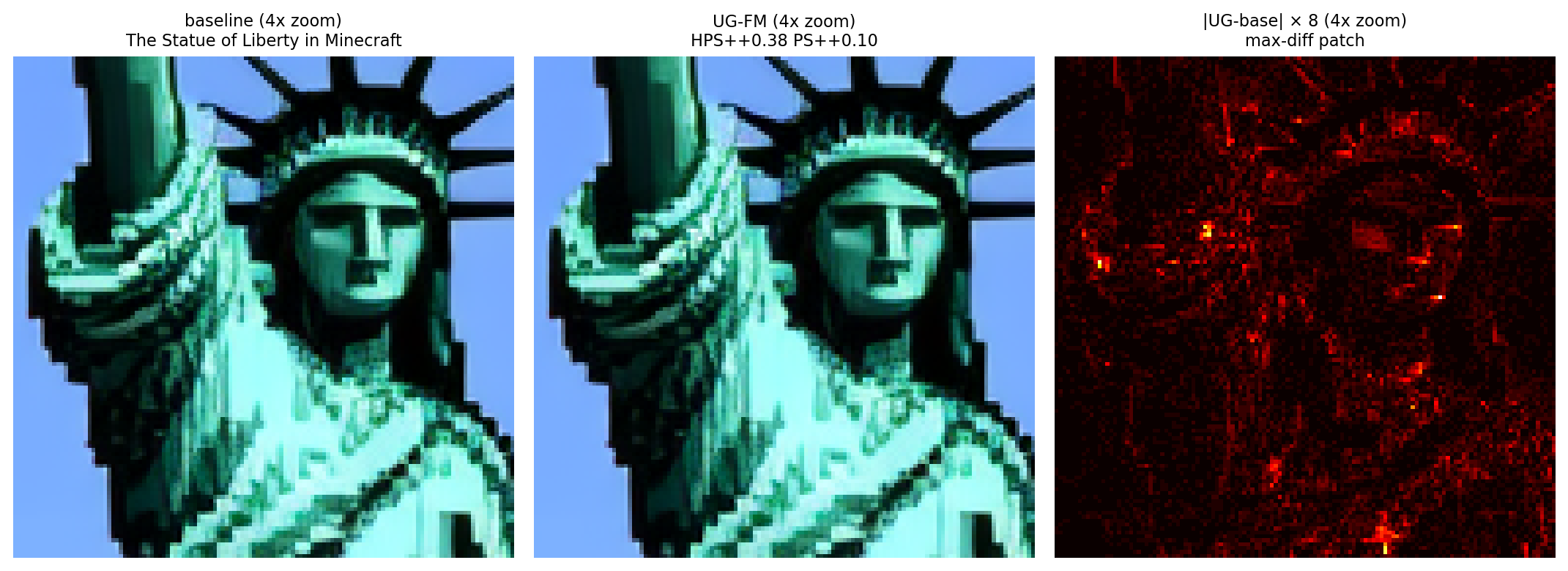}
\caption{$4{\times}$ zoom on the maximum-diff $128{\times}128$ patch from the highest-HPS-gain prompt (``The Statue of Liberty in Minecraft''). Left: baseline (SD3.5 static). Center: UG-FM. Right: $|$UG-baseline$| \times 8$ amplified intensity heatmap. The perturbation localizes in textured / shaded regions, visible at $4$--$8\times$ zoom.}
\label{fig:ug_fm_zoom}
\end{figure}

\section{Related work landscape and extended limitations}
\label{app:related_limits}

\subsection{Inference-time alignment landscape comparison}
\label{app:landscape}

A comparison-matrix view of the prior art discussed in the main paper's Related Work, summarising joint optimization scope, regularization, transport compatibility, T2I scope, and per-step granularity:

\begin{table}[h]
\centering\small
\caption{\textbf{Inference-time alignment landscape.} \cmark{}=present, \xmark{}=absent. PG-MAP is the only framework with all six properties; in particular, it is the only one whose active variable set $\mathcal{A}_t$ is non-trivially time-dependent.}
\label{tab:comparison}
\setlength{\tabcolsep}{3pt}
\begin{tabular}{l|cccccc}
\toprule
Method & Joint $(c,z_t)$ & Forward-cons. & FM-compat. & T2I scope & Per-step & Step-dep.\,$\mathcal{A}_t$ \\
\midrule
UG~\citep{bansal2023universal}        & \xmark & \xmark & limited & \cmark & \cmark & \xmark \\
PNO~\citep{peng2024pno}              & \cmark$^{*}$ & \xmark & \xmark & safety & \xmark & \xmark \\
DATE~\citep{na2025date}              & $c$-only & \xmark & \xmark & \cmark & \cmark & \xmark \\
DNO~\citep{tang2025dno}              & $z$-only & \xmark & \xmark & \cmark & \cmark & \xmark \\
FlowChef~\citep{patel2025flowchef}    & $z$-only & \xmark & \cmark & editing & \cmark & \xmark \\
ReNO~\citep{eyring2024reno}           & noise-only & \xmark & \xmark & \cmark$^{\dagger}$ & \xmark & \xmark \\
\midrule
\textbf{PG-MAP (ours)}                & \cmark & \cmark & \cmark & \cmark & \cmark & \cmark \\
\bottomrule
\end{tabular}
\\\smallskip
\footnotesize{$^{*}$PNO optimizes initial noise $z_T$ + prompt embedding (single trajectory-start perturbation), not per-step $z_t$. $^{\dagger}$ReNO targets one-step distilled T2I models; not applicable to our 28--50-step regime. The \emph{Step-dep.\,$\mathcal{A}_t$} column marks methods whose active variable set $\mathcal{A}_t \subseteq \{c, z_t\}$ varies non-trivially with $t$ (e.g., refine $\{c, z_t\}$ at high-noise steps but $\emptyset$ otherwise on DDPM, or $\{z_t\}$ at data-side only on FM); all prior methods hold $\mathcal{A}_t$ constant across the trajectory.}
\end{table}

\subsection{Extended limitations}
\label{app:limitations_extended}
\paragraph{(L5)~Optimization is non-concave.}
The objective is non-concave due to the denoiser nonlinearity; with $K{=}1$--$2$ steps we obtain only local approximations. The bounded-displacement properties (Appendix~\ref{app:proofs}) are local statements.

\paragraph{(L6)~Compute overhead.}
Wall-clock on SDXL (Tab.~\ref{tab:compute}): MAP-$cz$ runs at $\sim 2.1\times$ baseline; full PG-MAP runs at $\sim 5.5\times$ because the reward backward is unavoidable. Restricts deployment to offline/amortized settings; distillation of $(c_t^\star, z_t^\star)$ via $\pi_\phi$ is the natural follow-up.

\paragraph{(L7)~Reward in-distribution evaluation on SD~1.5.}
On SD~1.5 we use PickScore as both optimisation signal and reported metric (flagged in Tab.~\ref{tab:main_results} via $\dagger$); HPS, CLIPScore, and the human-evaluation study (\S\ref{sec:human_eval}) provide the out-of-distribution evaluation signals.

\section{CRR-MAP details}
\label{app:crr_full}

The main paper Tab.~\ref{tab:crr_results} reports the per-row CRR-MAP oracle results on PartiPrompts ($n{=}1632$, seed 123). This appendix expands the setup, dispatch, oracle-variant ablations, learned-router exploration, FM CRR-MAP, and failure-case breakdown.

\paragraph{Motivating observation.} A 4-prompt SDXL case study (Appendix~\ref{app:cz_extended}, Tab.~\ref{tab:cz_deltas_meanseeds}) shows a prompt-type split: on attribute-binding prompts $c$-optimization is the only variant with non-negative $\Delta$Aes; on atmospheric scenes, reward-driven $z_t$ refinement is the only variant with positive mean $\Delta$Aes. The split motivates the per-prompt routing diagnostic at population scale.

\paragraph{Oracle setup.} We reuse the baseline images and MAP-$cz$ images from Tab.~\ref{tab:main_results} as $f_{\text{base}}$ and $f_{\text{cz}}$, generate the MAP-$c$ images ($f_{\text{c}}$) on the same prompt split, and reuse Tuned-CFG\,$+$\,PG-MAP ($f_{\text{tcfg}}$). All four candidates per prompt are scored with PickScore, HPS\,v2, CLIPScore, and the LAION aesthetic predictor. The oracle is the per-prompt argmax over the four-metric Pareto-sum aggregate (sum of within-method z-scored scores; metric-isolated variants in \S\ref{app:crr_oracle_variants}); it has access to ground-truth scores of each candidate and is the upper bound of any per-prompt selector restricted to the same pool.

\paragraph{Headline numbers and dispatch.} On SDXL $n{=}1632$, oracle (Pareto-sum) routing attains $\mathbf{72.7\%}$ PickScore (paired Wilcoxon $p{=}7.4\!\times\!10^{-88}$), $63.8\%$ CLIPScore ($p{=}4.8\!\times\!10^{-48}$), $\mathbf{73.5\%}$ HPS ($p{=}1.1\!\times\!10^{-93}$), and $68.2\%$ Aesthetic ($p{=}7.9\!\times\!10^{-94}$); on SD~1.5 the ceiling is similarly large ($75.2\%$ / $65.6\%$ / $76.9\%$ / $66.7\%$ on PS / CLIP / HPS / Aes). Simultaneous improvement on all four metrics is an oracle ceiling: the case-study split holds at the population scale (the metric aggregate affects which oracle assignments are made --- pairwise symmetric difference $23.8$--$61.7\%$ across PS-led, CLIP-led, and Pareto-sum aggregates --- but not the qualitative Pareto-improvement signature). The oracle dispatches $32.3\%$ of SDXL prompts to $f_{\text{c}}$, $32.0\%$ to $f_{\text{cz}}$, and $35.7\%$ to $f_{\text{tcfg}}$ (per-prompt assignments in Appendix~\ref{app:crr_per_prompt}). Failure-case breakdown ($\sim 18$\,pp residual degradation rate after Gaussian null adjustment; failure modes dominate: tight attribute binding under high $\lambda$, abstract typography) is in Appendix~\ref{app:failure_breakdown}.

\subsection{CLIP-centroid router formula and lexical overrides}
\label{app:crr_router_full}

A frozen CLIP-text encoder $\phi$ embeds $y$ to $\phi(y)\in\mathbb{R}^{d}$ ($d{=}768$ for ViT-L/14). We curate three small prototype sets $P_{\text{bind}}$, $P_{\text{scene}}$, $P_{\text{bal}}$ ($\approx 10$ prompts each, listed in Appendix~\ref{app:crr_prototypes}) covering attribute-binding, atmospheric scene, and balanced everyday prompts respectively, and define class centroids $\bar{\phi}_k = \mathrm{normalize}\left(\tfrac{1}{|P_k|}\sum_{p\in P_k}\phi(p)\right)$. The base routing decision is
\begin{equation}
k^\star(y) \;=\; \arg\max_{k\in\{\text{bind},\,\text{scene},\,\text{bal}\}}\,\cos\!\big(\phi(y),\,\bar{\phi}_k\big),\qquad
r(y) \;=\;
\begin{cases}
f_{\text{c}}, & k^\star(y)=\text{bind},\\
f_{\text{tcfg}}, & k^\star(y)=\text{scene},\\
f_{\text{cz}}, & k^\star(y)=\text{bal}.
\end{cases}
\label{eq:crr_router}
\end{equation}
Two simple lexical overrides are applied before Eq.~\ref{eq:crr_router}: prompts of $\le 3$ tokens, and prompts containing typography cues (e.g., \texttt{the word ...}, \texttt{a sign reading ...}) are forced to $f_{\text{c}}$. The router cost is one CLIP-text forward pass ($\le 5$\,ms on RTX PRO 6000 Blackwell); in NFE units the router contribution is $\sim 0$.

\subsection{Prototype prompts used by the CLIP-text router}
\label{app:crr_prototypes}

The router of Eq.~\ref{eq:crr_router} compares the input prompt's CLIP-text embedding against three class centroids built from manually-curated prototype prompts, drafted to span the prompt-type axes the case study (\S\ref{sec:cz_analysis}) surfaces.

\begin{center}\small
\begin{tabular}{p{0.30\linewidth} p{0.62\linewidth}}
\toprule
Class & Prototype prompts \\
\midrule
\textsc{bind} (attribute-binding,
geometric, multi-object) & ``a red cube on a blue sphere''; ``a green apple inside a yellow basket''; ``a small blue car next to a large white truck''; ``a glass of orange juice with red straws''; ``the word HELLO in big block letters''; ``a stop sign next to a yield sign''; ``two cats and three dogs''; ``a yellow umbrella next to a blue umbrella''; ``a red triangle on top of a green square''; ``an apple, a banana, and a pear''. \\
\midrule
\textsc{scene} (atmospheric, artistic, landscape, portrait) & ``a serene mountain landscape at golden hour''; ``an oil painting of a stormy sea with crashing waves''; ``a cyberpunk city street in the rain at night''; ``a misty forest with rays of sunlight piercing the canopy''; ``an aerial view of a coral reef in turquoise water''; ``a rolling field of lavender at sunset''; ``a cozy library with ancient books and a fireplace''; ``an art deco hotel lobby''; ``a quiet beach at dawn with seagulls''; ``a Victorian street scene at dusk''. \\
\midrule
\textsc{bal} (everyday, single-subject, casual) & ``a person walking a dog in a park''; ``a chef cooking pasta in a kitchen''; ``a child playing with a toy on a wooden floor''; ``a cat sleeping on a couch''; ``a cup of coffee on a desk''; ``a bicycle leaning against a brick wall''; ``a horse running through a field''; ``a dog catching a frisbee''; ``a woman reading a book''; ``a butterfly on a flower''. \\
\bottomrule
\end{tabular}
\end{center}

The class centroids $\bar{\phi}_k$ are computed once at deployment by averaging the L2-normalized CLIP-text embeddings of each prototype set and re-normalizing.

\subsection{Lexical override rules}
\label{app:crr_lexical_overrides}

Two simple lexical rules apply before Eq.~\ref{eq:crr_router}; both force routing to $f_{\text{c}}$:
\begin{itemize}\setlength\itemsep{1pt}
\item \emph{Short-prompt override.} Prompts of $\le 3$ tokens route to $f_{\text{c}}$. The latent-reward variants over-steer when the prompt admits a wide compatible image manifold.
\item \emph{Typography override.} Prompts containing \texttt{the word}, \texttt{sign that reads}, \texttt{sign reading}, \texttt{letters spelling}, \texttt{text that says}, or \texttt{in big block letters} route to $f_{\text{c}}$. Latent perturbation degrades legibility.
\end{itemize}
The lexical rules are defined a~priori from the prompt-type analysis of Section~\ref{sec:cz_analysis}; they are not tuned on the test split.

\subsection{Oracle variants and metric aggregates}
\label{app:crr_oracle_variants}

The oracle row of Tab.~\ref{tab:crr_results} uses the four-metric aggregate $r^\star(y)=\arg\max_{k}\big(\tilde{\textsc{ps}}(k,y)+\tilde{\textsc{hps}}(k,y)+\tilde{\textsc{clip}}(k,y)+\tilde{\textsc{aes}}(k,y)\big)$, where each tilde is the within-method z-score across the routing pool. We report three metric-isolated variants:

\begin{center}\small
\begin{tabular}{lcccc}
\toprule
Oracle aggregate (SDXL, $n{=}1632$) & PickScore & HPS & CLIPScore & Aesthetic \\
\midrule
PS-only        & $\mathbf{86.3\%}$ & $67.8\%$ & $52.9\%$ & $58.1\%$ \\
CLIP-only      & $55.6\%$ & $58.0\%$ & $\mathbf{81.8\%}$ & $54.8\%$ \\
Pareto-sum (default) & $68.8\%$ & $69.9\%$ & $63.7\%$ & $70.8\%$ \\
Balanced rank  & $73.4\%$ & $\mathbf{74.3\%}$ & $64.3\%$ & $67.5\%$ \\
\bottomrule
\end{tabular}
\end{center}

The four aggregates produce quantitatively different oracles, with pairwise symmetric difference between $23.8\%$ and $61.7\%$. We adopt Pareto-sum as the headline aggregate because it most cleanly demonstrates that no single fixed deployment can match its multi-metric envelope.

\subsection{PartiPrompts \emph{Challenge}-category breakdown}
\label{app:crr_categories}

We partition the $n{=}1632$ test split along the PartiPrompts \emph{Challenge} axis, coarsening into $5$ groups: \textsc{binding}, \textsc{typography}, \textsc{scene}, \textsc{linguistic}, \textsc{general}.

\begin{table}[h]
\centering\small
\caption{PartiPrompts \emph{Challenge}-category breakdown of win rates ($\%$) vs.\ baseline on SDXL ($n{=}1632$, seed 123). The breakdown surfaces a clean prompt-type split that motivates the per-prompt routing of \S\ref{sec:crr_results}: each variant has its own win category --- MAP-$c$ leads CLIP on \textsc{typography}; MAP-$cz$ / PG-MAP lead PickScore on \textsc{general} and \textsc{scene}; and Tuned-CFG\,$+$\,PG-MAP is the recommended HPS deployment, leading HPS on every category. The PG-MAP defaults without Tuned-CFG specialize for PickScore / CLIP / Aesthetic; the deployment trade-off (HPS vs.\ PickScore / CLIP / Aesthetic) is the routing signal CRR-MAP exploits. \emph{Top:} PickScore and HPS. \emph{Bottom:} CLIP and Aesthetic.}
\label{tab:partiprompts_categories}
\setlength{\tabcolsep}{4pt}
% ---- Top half: PickScore + HPS ----
\begin{tabular}{l|cccc|cccc}
\toprule
Category (n) & \multicolumn{4}{c|}{PickScore} & \multicolumn{4}{c}{HPS} \\
 & $c$ & $cz$ & pg & t{+}pg & $c$ & $cz$ & pg & t{+}pg \\
\midrule
Binding ($125$)        & $50.4\%$ & $54.4\%$ & $54.4\%$ & $\mathbf{56.0\%}$ & $58.4\%$ & $48.8\%$ & $48.8\%$ & $\mathbf{69.6\%}$ \\
Typography ($90$)      & $52.2\%$ & $52.2\%$ & $54.4\%$ & $\mathbf{64.4\%}$ & $52.2\%$ & $57.8\%$ & $56.7\%$ & $\mathbf{72.2\%}$ \\
Scene ($422$)          & $54.5\%$ & $55.9\%$ & $\mathbf{56.4\%}$ & $54.3\%$ & $45.5\%$ & $46.7\%$ & $48.3\%$ & $\mathbf{65.6\%}$ \\
Linguistic ($61$)      & $54.1\%$ & $55.7\%$ & $54.1\%$ & $\mathbf{59.0\%}$ & $47.5\%$ & $49.2\%$ & $49.2\%$ & $\mathbf{67.2\%}$ \\
General ($923$)        & $49.7\%$ & $56.7\%$ & $\mathbf{56.8\%}$ & $47.8\%$ & $51.6\%$ & $46.0\%$ & $46.4\%$ & $\mathbf{62.6\%}$ \\
\midrule
\emph{All} ($1632$)    & $51.4\%$ & $56.2\%$ & $\mathbf{56.4\%}$ & $51.3\%$ & $50.3\%$ & $47.2\%$ & $47.9\%$ & $\mathbf{64.6\%}$ \\
\bottomrule
\end{tabular}\\[6pt]
% ---- Bottom half: CLIP + Aesthetic ----
\begin{tabular}{l|cccc|cccc}
\toprule
Category (n) & \multicolumn{4}{c|}{CLIP} & \multicolumn{4}{c}{Aesthetic} \\
 & $c$ & $cz$ & pg & t{+}pg & $c$ & $cz$ & pg & t{+}pg \\
\midrule
Binding ($125$)        & $43.2\%$ & $43.2\%$ & $40.8\%$ & $\mathbf{49.6\%}$ & $52.0\%$ & $50.4\%$ & $51.2\%$ & $\mathbf{60.8\%}$ \\
Typography ($90$)      & $\mathbf{57.8\%}$ & $51.1\%$ & $56.7\%$ & $57.8\%$ & $54.4\%$ & $\mathbf{66.7\%}$ & $66.7\%$ & $64.4\%$ \\
Scene ($422$)          & $49.8\%$ & $50.0\%$ & $\mathbf{51.7\%}$ & $49.1\%$ & $49.5\%$ & $60.2\%$ & $\mathbf{60.4\%}$ & $56.4\%$ \\
Linguistic ($61$)      & $41.0\%$ & $47.5\%$ & $54.1\%$ & $\mathbf{62.3\%}$ & $50.8\%$ & $44.3\%$ & $42.6\%$ & $\mathbf{52.5\%}$ \\
General ($923$)        & $48.4\%$ & $48.4\%$ & $47.8\%$ & $\mathbf{53.6\%}$ & $49.1\%$ & $56.2\%$ & $\mathbf{56.4\%}$ & $55.0\%$ \\
\midrule
\emph{All} ($1632$)    & $48.5\%$ & $48.6\%$ & $49.0\%$ & $\mathbf{52.8\%}$ & $49.8\%$ & $57.0\%$ & $\mathbf{57.2\%}$ & $56.5\%$ \\
\bottomrule
\end{tabular}
\end{table}

\subsection{Per-prompt routing distribution and oracle disagreements}
\label{app:crr_per_prompt}

\begin{center}\small
\begin{tabular}{lccc}
\toprule
Oracle aggregate (SDXL, $n{=}1632$) & $\to f_{\text{c}}$ & $\to f_{\text{cz}}$ & $\to f_{\text{tcfg}}$ \\
\midrule
Pareto-sum (default) & $32.3\%$ ($527$) & $32.0\%$ ($522$) & $35.7\%$ ($583$) \\
PS-led               & $25.9\%$ ($423$) & $38.2\%$ ($623$) & $35.9\%$ ($586$) \\
CLIP-led             & $29.7\%$ ($485$) & $29.4\%$ ($479$) & $40.9\%$ ($668$) \\
Aesthetic-led        & $23.5\%$ ($383$) & $34.4\%$ ($561$) & $42.2\%$ ($688$) \\
\bottomrule
\end{tabular}\end{center}

All four oracle aggregates dispatch a non-trivial mass to each pool member. The four oracle distributions agree on the qualitative pattern (each pool member is informative for some non-trivial subset) but disagree quantitatively (pairwise symmetric difference $23.8$--$61.7\%$).

\subsection{Deployable router heads: explored and future directions}
\label{app:crr_classifier_alts}

The oracle ceiling reported in Tab.~\ref{tab:crr_results} is the upper bound for any selector restricted to the 3-method pool. Building a router head that approaches this ceiling at $\sim 0$ inference-cost overhead is a follow-up direction; preliminary CLIP-prototype (Eq.~\ref{eq:crr_router}) and 5-fold-CV linear-probe routers using only prompt-text features deliver $\sim 1$--$3$\,pp above the best fixed deployment on each metric, indicating that the prompt-text signal alone is insufficient and that approaching the oracle ceiling requires an image-conditioned or learned router. Three follow-up directions:

\begin{itemize}\setlength\itemsep{1pt}
\item \emph{Image-conditioned router.} Generate a single quick-and-dirty image (e.g., the baseline output) and embed it with CLIP-image; concatenate with CLIP-text. The router would have access to image-grounded structure (composition complexity, color palette, texture density).
\item \emph{Per-metric distillation.} Train four metric-specific routers, each predicting ``which method wins on this metric'', and let downstream deployment pick a router based on the prioritised metric.
\item \emph{Zero-shot LLM classifier.} A frozen instruction-tuned LLM with a 3-class system prompt. Adds latency ($\sim 100$\,ms / prompt); valuable when the deployment already has an LLM in the loop.
\end{itemize}

\end{document}